%%%%%%%% ICML 2020 EXAMPLE LATEX SUBMISSION FILE %%%%%%%%%%%%%%%%%

\documentclass{article}

% Recommended, but optional, packages for figures and better typesetting:
\usepackage{microtype}
\usepackage{graphicx}
\usepackage{subfigure}
\usepackage{booktabs} % for professional tables

% hyperref makes hyperlinks in the resulting PDF.
% If your build breaks (sometimes temporarily if a hyperlink spans a page)
% please comment out the following usepackage line and replace
% \usepackage{icml2020} with \usepackage[nohyperref]{icml2020} above.
% \usepackage{hyperref}

% Attempt to make hyperref and algorithmic work together better:

% Use the following line for the initial blind version submitted for review:
%\usepackage{icml2020}

% If accepted, instead use the following line for the camera-ready submission:
\usepackage[accepted]{icml2020}

\usepackage{amsmath,amssymb,amsfonts,amsthm}

\usepackage{mathtools}  
\usepackage{tabulary}
\usepackage{booktabs}

\usepackage{graphics}
\usepackage{ textcomp }

\usepackage{pdfpages}
 \usepackage{colortbl}
 \usepackage[english]{babel}
 \usepackage[utf8]{inputenc}
 \usepackage{lipsum}
\usepackage{ nicefrac}
\usepackage{bbm} 
\usepackage{comment}

\usepackage{color}

\usepackage[hidelinks]{hyperref}
\definecolor{darkred}{RGB}{150,0,0}
\definecolor{darkgreen}{RGB}{0,150,0}
\definecolor{darkblue}{RGB}{0,0,150}
\hypersetup{colorlinks=true, linkcolor=red, citecolor=blue, urlcolor=darkblue}

\newcommand{\Otilde}{\widetilde{\mathcal{O}}}

\newtheorem{assumption}{Assumption}
\newtheorem{theorem}{Theorem}[section]

\newtheorem{lemma}[theorem]{Lemma}
\newtheorem{Proposition}[theorem]{Proposition}

\theoremstyle{definition}
\newtheorem{definition}{Definition}[section]

\newcommand{\norm}[1]{\left\lVert#1\right\rVert}

% The \icmltitle you define below is probably too long as a header.
% Therefore, a short form for the running title is supplied here:
\icmltitlerunning{Safe Linear Thompson Sampling with Side Information}

\begin{document}

\onecolumn
\icmltitle{Safe Linear Thompson Sampling with Side Information}

% It is OKAY to include author information, even for blind
% submissions: the style file will automatically remove it for you
% unless you've provided the [accepted] option to the icml2020
% package.

% List of affiliations: The first argument should be a (short)
% identifier you will use later to specify author affiliations
% Academic affiliations should list Department, University, City, Region, Country
% Industry affiliations should list Company, City, Region, Country

% You can specify symbols, otherwise they are numbered in order.
% Ideally, you should not use this facility. Affiliations will be numbered
% in order of appearance and this is the preferred way.
\icmlsetsymbol{equal}{}

\begin{icmlauthorlist}
\icmlauthor{Ahmadreza Moradipari}{}
\icmlauthor{Sanae Amani}{}
\icmlauthor{Mahnoosh Alizadeh}{}
\icmlauthor{Christos Thrampoulidis}{}
\end{icmlauthorlist}

%\icmlaffiliation{to}{Department of Computation, University of Torontoland, Torontoland, Canada}
%\icmlaffiliation{goo}{Googol ShallowMind, New London, Michigan, USA}
%\icmlaffiliation{ed}{School of Computation, University of Edenborrow, Edenborrow, United Kingdom}

\icmlcorrespondingauthor{Ahmadreza Moradipari}{ahmadreza_moradipari@ucsb.edu}
%\icmlcorrespondingauthor{Eee Pppp}{ep@eden.co.uk}

% You may provide any keywords that you
% find helpful for describing your paper; these are used to populate
% the "keywords" metadata in the PDF but will not be shown in the document
\icmlkeywords{Machine Learning, ICML}

\vskip 0.3in

% this must go after the closing bracket ] following \twocolumn[ ...

% This command actually creates the footnote in the first column
% listing the affiliations and the copyright notice.
% The command takes one argument, which is text to display at the start of the footnote.
% The \icmlEqualContribution command is standard text for equal contribution.
% Remove it (just {}) if you do not need this facility.

%\printAffiliationsAndNotice{}  % leave blank if no need to mention equal contribution
%\printAffiliationsAndNotice{} % otherwise use the standard text.

\begin{abstract}
The design and performance analysis of bandit algorithms in the presence of stage-wise safety or reliability constraints has recently garnered significant interest. In this work, we consider the linear stochastic bandit problem under additional \textit{linear safety constraints} that need to be satisfied at each round. We provide a new safe algorithm based on linear Thompson Sampling (TS) for this problem and show a frequentist regret of order $\mathcal{O} (d^{3/2}\log^{1/2}d \cdot T^{1/2}\log^{3/2}T)$, which remarkably matches the results provided by \cite{abeille2017linear} for the standard linear TS algorithm in the absence of safety constraints. We compare the performance of our algorithm with UCB-based safe algorithms and highlight how the inherently randomized nature of TS leads to a superior performance in expanding the set of safe actions the algorithm has access to at each round.
\end{abstract}

\section{Introduction}
\label{introduction}

The application of stochastic bandit optimization algorithms to safety-critical systems has received significant attention in the past few years. In such cases, the learner repeatedly interacts with a system with uncertain reward function and operational constraints. In spite of this uncertainty, the learner needs to ensure that her actions do not violate the operational constraints {\it at any round of the learning process}. As such, especially in the earlier rounds, there is a need to choose actions with caution, while at the same time making sure that the chosen action provide sufficient learning opportunities about the set of safe actions.  Notably, the actions deemed safe by the algorithm might not originally include the optimal action.  This uncertainty about safety and the resulting conservative behavior means the learner could experience additional regret in such constrained environments.

In this paper, we focus on  a special class of stochastic bandit optimization problems where the reward is a linear function of the actions. This class of problems, referred to as linear stochastic bandits (LB), generalizes multi-armed bandit (MAB) problems to the setting where each action is associated with a feature vector $x$, and the expected reward of playing each action is equal to the inner product of its feature vector and an unknown parameter vector $\theta^{\star}$. There exists several variants of LB that study the finite \cite{auer2002finite} or infinite \cite{Dani08stochasticlinear,Tsitsiklis,abbasi2011improved} set of actions, as well as the case where the set of feature vectors can change over time \cite{pmlr-v15-chu11a,li2010contextual}.   Two efficient approaches have been developed for LB: {\it linear UCB} (LUCB) and {\it linear Thompson Sampling} (LTS).
For LUCB, \cite{abbasi2011improved} provides a regret bound of order $\mathcal{O} (d \cdot T^{1/2} \log T)$.  For LTS \cite{agrawal2013thompson,abeille2017linear} adopt a  frequentist view  and show a regret of order $\mathcal{O} (d^{3/2}\log^{1/2}d \cdot T^{1/2}\log^{3/2}T)$.
Here we provide a LTS algorithm that respects \textit{linear safety constraints} and study its performance. We formally define the problem setting before summarizing our contributions.

\subsection{Safe Stochastic Linear Bandit Model}\label{sec:setting}
\textbf{Reward function.} The learner is given a convex and compact set of actions $\mathcal{D}_0 \subset \mathbb{R}^d$. At each round $t$, playing an action $x_t \in \mathcal{D}_0$ results in observing reward \begin{align}r_t := x_t^{\top}\theta_{\star} + \xi_t,\end{align}  where $\theta_{\star} \in \mathbb{R}^d$ is a fixed, but \emph{unknown}, parameter and $\xi_t$ is a zero-mean additive noise.

\textbf{Safety constraint.} We further assume that the environment is subject to a linear constraint:  
\begin{align}
    x_t^{\top}  \mu_{\star} \leq C, \label{safty.const}
\end{align}
which needs to be satisfied by the action $x_t$ at every round $t$, to guarantee safe operation of the system.
%(referred to as the safety constraint)
Here, $C$ is a \textit{positive} constant that is \textit{known} to the learner, while  $\mu_{\star}$ is a fixed, but \emph{unknown} vector parameter. Let us denote the set of ``safe actions'' that satisfy the constraint \eqref{safty.const} as follows:
\begin{align}
    \mathcal{D}_0^s (\mu_{\star}) := \{ x \in \mathcal{D}_0 : x^{\top}  \mu_{\star} \leq C   \}. \label{action.set}
\end{align}
Clearly, $\mathcal{D}_0^s (\mu_{\star})$ is unknown to the learner, since $\mu_\star$ is itself unknown. However, we consider a setting in which, at every round $t$, the learner receives \emph{side information} about the safety set via noisy measurements:
\begin{align}\label{eq:side}
w_t = x_t^{\top}\mu_{\star}+\zeta_t,\end{align} where $\zeta_t$ is zero-mean additive noise. During the learning process, the learner needs a mechanism that allows her to use the side measurements in \eqref{eq:side} for determining the safe set $\mathcal{D}_0^s (\mu_{\star})$. 
This is critical, since it is required (at least with high-probability) that $x_t\in\mathcal{D}_0^s (\mu_{\star})$ for all rounds $t$. %Therefore 

\textbf{Regret.} The {\it cumulative pseudo-regret} of the learner up to round $T$  is defined as $R(T) = \sum_{t=1}^T  x^{\top}_{\star} \theta_{\star} - x_t^{\top} \theta_{\star}$, where $x_{\star}$ is the optimal \emph{safe} action that maximizes the expected reward over $D_0^s(\mu_\star)$, i.e., $$x_{\star} = \text{arg}\max_{x \in \mathcal{D}_0^s(\mu^*)} x^{\top} \theta_{\star}.$$

\textbf{Learning goal.} The learner's objective is to control the growth of the pseudo-regret. Moreover, we require that the chosen actions $x_t, t \in [T]$ are safe (i.e., they belong to $\mathcal{D}_0^s(\mu_\star)$ in \eqref{action.set}) with high probability. As is common, we use regret to refer to the pseudo-regret $R(T)$. 

\subsection{Contributions} 

\noindent$\bullet$~~We provide the first \emph{safe} LTS (Safe-LTS) algorithm with provable regret guarantees for the linear bandit problem with linear safety constraints.

\noindent$\bullet$~~Our regret analysis shows that Safe-LTS achieves the \emph{same} order $\mathcal{O} (d^{3/2}\log^{1/2}d \cdot T^{1/2}\log^{3/2}T)$ of   regret as the original LTS (without safety constraints) as shown by \cite{abeille2017linear}. Hence, the dependence of the regret of Safe-LTS  on the time horizon $T$ \emph{cannot} be improved modulo logarithmic factors (see lower bounds for LB in \cite{Dani08stochasticlinear,Tsitsiklis}).

\noindent$\bullet$~~We compare Safe-LTS to the existing safe versions of LUCB for  linear stochastic bandits with linear stage-wise safety constraints. We show that our algorithm has: better regret in the worst-case, fewer parameters to tune and superior empirical performance.

\subsection{ Related Work}\label{priorwork}

\textbf{Safety -} A diverse body of related works on stochastic optimization and control have considered the effect of safety constraints that need to be met during the run of the algorithm \cite{aswani2013provably, koller2018learning} and references therein. Closely related to our work, \cite{Krause,sui2018stagewise}  study \emph{nonlinear} bandit optimization with \emph{nonlinear}  safety constraints using Gaussian processes (GPs) as non-parametric models for both the reward and the constraint functions. Their algorithms  have shown great promises in robotics applications \cite{ostafew2016robust,7039601}.  Without the GP assumption, \cite{kamgar} proposes and analyzes a safe variant of the Frank-Wolfe algorithm to solve a smooth optimization problem with an unknown convex objective function and unknown \emph{linear} constraints (with side information, similar to our setting). All the above algorithms come with provable convergence guarantees, but \emph{no} regret bounds.  
To the best of our knowledge, the first work that derived an algorithm with provable regret guarantees for bandit optimization with stage-wise safety constraints, as the ones imposed on the aforementioned works, is \cite{amani2019linear}. While \cite{amani2019linear} restricts attention to a \textit{linear} setting, their results reveal that the presence of the safety constraint --even though linear-- can have a  non-trivial effect on the performance of LUCB-type algorithms. Specifically, the proposed Safe-LUCB algorithm comes with a problem-dependent regret bound that depends critically on the location of the optimal action in the safe action set --  increasingly so in problem instances for which the safety constraint is active. In \cite{amani2019linear}, the linear constraint function involves the same unknown vector (say, $\theta_{\star}$) as the one that specifies the linear reward. Instead, in Section \ref{sec:setting} we allow the constraint to depend on a new parameter vector (say, $\mu_{\star}$) to which the learner get access via side-information measurements \eqref{eq:side}. This latter setting is the direct \emph{linear} analogue to that of \cite{Krause,sui2018stagewise,kamgar} and we demonstrate that an appropriate Safe-LTS algorithm enjoys regret guarantees of the same order as the original LTS \emph{without} safety constraints. A more elaborate discussion comparing our results to \cite{amani2019linear} is provided in Section \ref{camparing with safe-Lucb}. We also mention \cite{vanroy} as another recent work on safe linear bandits. In contrast to the previously mentioned references, \cite{vanroy} defines safety as the requirement of ensuring that the \emph{cumulative} (linear) reward up to each round stays above a given percentage of the performance of a known \textit{baseline} policy. 
As a closing remark, \cite{amani2019linear,vanroy,kamgar} show that simple linear models for safety constraints might be directly relevant to several applications such as medical trials  applications, recommendation systems or managing the customers’ demand in power-grid systems. Moreover, even in more complex settings where linear models do not directly apply (e.g., \cite{ostafew2016robust,7039601}), we still believe that this simplification is an appropriate first step towards a principled study of the regret performance of safe algorithms in sequential decision settings.

\textbf{Thompson Sampling -}  Even though TS-based algorithms \cite{thompson1933likelihood}  are computationally easier to implement than UCB-based algorithms and have shown great empirical performance, they were largely ignored by the academic community until a few years ago, when a series of papers (e.g., \cite{russo, abeille2017linear, agrawal2012analysis, kaufmann2012thompson}) showed that TS achieves  optimal  performance in both frequentist and Bayesian settings. Most of the literature focused on the analysis of the Bayesian regret of TS for general settings such as linear bandits or reinforcement learning (see e.g., \cite{osband2015bootstrapped}).  More recently, \cite{russo2016information,dong2018information,dong2019performance} provided an information-theoretic analysis of TS.%, where the key tool in their approach is the {\it information ratio} which quantifies the trade-off between exploration and exploitation. 
Additionally, \cite{gopalan2015thompson} provides regret guarantees for TS in the finite and infinite MDP setting. Another notable paper is \cite{gopalan2014thompson}, which studies the stochastic MAB problem in complex action settings providing a regret bound that scales logarithmically in time with improved constants.  None of the aforementioned papers study the performance of TS for linear bandits with safety constraints. %Our proof for  Safe-LTS is inspired by the proof technique in \cite{abeille2017linear}.

\section{Safe Linear Thompson Sampling}

Our proposed algorithm is a safe variant of Linear Thompson Sampling (LTS).
At any round $t$, given a regularized least-squares (RLS) estimate $\hat{\theta_t}$, the algorithm samples a perturbed parameter $\Tilde{\theta}_t$ that is appropriately distributed to guarantee sufficient exploration. Considering this sampled $\Tilde{\theta}_t$ as the true environment, the algorithm chooses the action with the highest possible reward while making sure that the safety constraint \eqref{safty.const} holds. 
%The presence of the safety constraint complicates the learner's choice of actions. 
In order to ensure that actions remain safe at all rounds, the algorithm uses the side-information \eqref{eq:side} to construct a confidence region $\mathcal{C}_t$, which  contains the unknown parameter $\mu_{\star}$ with high probability. With this, it forms an \emph{inner} approximation $\mathcal{D}_t^s$ of the safe set, which is composed by all actions $x_t$ that satisfy the safety constraint \emph{for all} $v \in \mathcal{C}_t$.  The summary is presented in Algorithm \ref{alg:safe.Ts} and a detailed description follows. % in the rest of this section. 

\begin{algorithm}[tb]
   \caption{Safe Linear Thompson Sampling (Safe-LTS)} 
\begin{algorithmic}
   \STATE {\bfseries Input:} $\delta, T, \lambda$
%   \REPEAT
   \STATE Set $\delta' = \frac{\delta}{6}$
   \FOR{$t=1$ {\bfseries to} $T$}
%   \IF{$x_i > x_{i+1}$}
   \STATE Sample $\eta_t \sim \mathcal{H}^{\text{TS}}$ (see Section \ref{challenges of safety})  
   \STATE Set $V_t = \lambda I + \sum_{s=1}^{t-1} x_s x_s^{\top}$ 
%   \ENDIF
    \STATE Compute RLS-estimate $\hat{\theta}_t$ and $\hat{\mu}_t$ (see \eqref{RLS-estimate})
    \STATE Set:  $\Tilde{\theta}_t = \hat{\theta}_t + \beta_t(\delta') V_t^{-\frac{1}{2}} \eta_t$
    \STATE Build the confidence region:\\ $\mathcal{C}_t (\delta') = \{ v \in \mathbb{R^d} : \norm{ v-\hat{\mu}}_{V_t} \leq \beta_t(\delta')   \}$
    \STATE Compute the estimated safe set:\\ $\mathcal{D}_t^s = \{ x \in \mathcal{D}_0 : x^{\top}  v \leq C, \forall v \in \mathcal{C}_t (\delta')   \}$ 
    \STATE Play the following safe action: \\ ~ $x_t = \arg\max_{x \in \mathcal{D}_t^s}  x^{\top} \Tilde{\theta}_t$ 
    \STATE Observe reward $r_t$ and measurement $w_t$ 
   \ENDFOR
%   \UNTIL{$noChange$ is $true$}
\label{alg:safe.Ts}
\end{algorithmic}
\end{algorithm}

\subsection{Model assumptions}
\textbf{Notation.} $[n]$ denotes the set $\{1,2,\dots,n \}$. The Euclidean norm of a vector $x$ is denoted by $\norm{x}_2$. Its weighted $\ell_2$-norm with respect to a positive semidefinite matrix $V$ is denoted by $\norm{x}_V = \sqrt{x^{\top} V x}$. We also use the standard $\Otilde$ notation that ignores poly-logarithmic factors. Finally, for ease of notation, from now onwards we refer to the safe set in \eqref{safe.decision.set} by $\mathcal{D}_0^s$ and drop the dependence on $\mu_{\star}$.

Let  $\mathcal{F}_t = (\mathcal{F}_1, \sigma(x_1,\dots,x_t, \xi_1,\dots,\xi_t, \zeta_1,\dots,\zeta_t))$ denote the filtration that represents the accumulated information up to round $t$. In the following, we introduce standard assumptions on the problem.

\begin{assumption} \label{assmp.1}
For all $t$, $\xi_t$ and $\zeta_t$ are conditionally zero-mean, $R$-sub-Gaussian noise variables, i.e., $\mathbb{E}[\xi_t | \mathcal{F}_{t-1}] = \mathbb{E}[\zeta_t | \mathcal{F}_{t-1}] = 0$, and $\mathbb{E}[e^{\alpha \xi_t} | \mathcal{F}_{t-1}] \leq \exp{(\frac{\alpha^2 R^2}{2})}$, $\mathbb{E}[e^{\alpha \zeta_t} | \mathcal{F}_{t-1}] \leq \exp{(\frac{\alpha^2 R^2}{2})}, \forall \alpha \in \mathbb{R}^d$.
\end{assumption}

\begin{assumption}\label{assmp.2}
There exists a positive constant $S$ such that $ \norm{ \theta_{\star} }_2 \leq S$ and $ \norm{ \mu_{\star} }_2 \leq S$.
\end{assumption}

\begin{assumption}\label{assmp.3}
The action set $\mathcal{D}_0$ is a compact and convex subset of $\mathbb{R}^d$ that contains the origin. We assume $\norm{x}_2 \leq L$, $\forall x \in \mathcal{D}_0$.
\end{assumption}

% \begin{remark}
It is straightforward to generalize our results when the sub-gaussian constants of $\xi_t$ and $\zeta_t$ and/or the upper bounds on $\|\theta_{\star}\|_2$ and $\|\mu_{\star}\|_2$ are different. Throughout, we assume they are equal, for brevity. 
%for brevity of the notation we assume these are equal. 
% \end{remark}

\subsection{Algorithm description and discussion} Let $(x_1,\dots,x_t)$ be the sequence of actions and $(r_1,\dots,r_t)$ and $(w_1,\dots,w_t)$ be the corresponding rewards and side-information measurements, respectively. For any $\lambda > 0$, we can obtain RLS-estimates $\hat{\theta}_t$ of $\theta_{\star}$ and $\hat{\mu}_t$ of $\mu_{\star}$ as follows: \begin{align}
    & \hat{\theta}_t = V_t^{-1} \sum_{s=1}^{t-1} r_s x_s, ~~\hat{\mu}_t = V_t^{-1} \sum_{s=1}^{t-1} w_s x_s, \label{RLS-estimate}\\
    & V_t = \lambda I + \sum_{s=1}^{t-1} x_s x_s^{\top}, \label{ Gram.matrix}
\end{align}    
where  $V_t$ is the Gram matrix of the actions.
Based on $\hat{\theta}_t$ and $\hat{\mu}_t$, Algorithm \ref{alg:safe.Ts} constructs two confidence regions $\mathcal{E}_t:=\mathcal{E}_t(\delta')$ and $\mathcal{C}_t:=\mathcal{C}_t(\delta')$  as follows: \begin{align}
& \mathcal{E}_t := \{ \theta \in \mathbb{R}^d : \norm{\theta-\hat{\theta}_t}_{V_t} \leq \beta_t(\delta') \}, \label{ellipsoid.RLS} \\&  \mathcal{C}_t := \{ v \in \mathbb{R}^d : \norm{v-\hat{\mu}_t}_{V_t} \leq \beta_t(\delta') \}.
\end{align}
Notice that $\mathcal{E}_t$ and  $\mathcal{C}_t$ both depend on $\delta'$, but we suppress notation for simplicity, when clear from cotext.
The ellipsoid radius $\beta_t$ is chosen according to the Theorem \ref{abbasi.2} in \cite{abbasi2011improved} in order to guarantee that $\theta_{\star} \in \mathcal{E}_t  $ and $\mu_{\star} \in \mathcal{C}_t$ with high probability.

\begin{theorem}\label{abbasi.2}\cite{abbasi2011improved}
Let Assumptions \ref{assmp.1} and \ref{assmp.2} hold. For a fixed $\delta \in (0,1)$, and \begin{align}\label{eq:beta_t}
    &\beta_t(\delta) = R \sqrt{d \log{\Big(\frac{1 + \frac{(t-1)L^2}{\lambda}}{\delta}\Big)}} + \sqrt{\lambda} S,  
\end{align} with probability at least $1-\delta$, it holds that  $\theta_{\star} \in \mathcal{E}_t(\delta) $ and $\mu_{\star} \in \mathcal{C}_t(\delta)$, for all $t \geq 1$.
\end{theorem}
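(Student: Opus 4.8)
This statement is a two-fold invocation of the self-normalized confidence-ellipsoid bound of \cite{abbasi2011improved}: the reward sequence $(x_s,r_s)$ and the side-information sequence $(x_s,w_s)$ are two instances of the same regularized least-squares regression, each driven by conditionally $R$-sub-Gaussian noise (namely $\xi_s$, resp. $\zeta_s$, by Assumption~\ref{assmp.1}) and each having a ground-truth parameter of norm at most $S$ ($\theta_\star$, resp. $\mu_\star$, by Assumption~\ref{assmp.2}). So the plan is to establish one generic RLS-estimation bound and apply it twice, closing with a union bound over the two events.

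First I would decompose the estimation error. Substituting $r_s = x_s^\top\theta_\star + \xi_s$ into the RLS formula \eqref{RLS-estimate} and using $V_t\theta_\star = \lambda\theta_\star + \sum_{s=1}^{t-1}(x_s^\top\theta_\star)x_s$ gives $\hat\theta_t - \theta_\star = V_t^{-1}\sum_{s=1}^{t-1}x_s\xi_s - \lambda V_t^{-1}\theta_\star$. Taking $\norm{\cdot}_{V_t}$ and applying the triangle inequality, $\norm{\hat\theta_t-\theta_\star}_{V_t} \le \norm{\sum_{s=1}^{t-1}x_s\xi_s}_{V_t^{-1}} + \lambda\norm{\theta_\star}_{V_t^{-1}}$, and since $V_t\succeq\lambda I$ implies $\lambda V_t^{-1}\preceq I$, the last term is at most $\sqrt\lambda\,\norm{\theta_\star}_2 \le \sqrt\lambda\,S$ by Assumption~\ref{assmp.2}. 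It then remains to control the martingale term $S_t := \sum_{s=1}^{t-1}x_s\xi_s$ in the $V_t^{-1}$-norm.

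Next I would invoke the self-normalized tail inequality for vector-valued martingales — the method-of-mixtures argument that forms a non-negative supermartingale from a Gaussian mixture of the exponential sub-Gaussian martingales $\exp(\alpha^\top S_s - \tfrac12 R^2\alpha^\top V_s\alpha)$ and applies a maximal (Ville-type) inequality — which gives: with probability at least $1-\delta$, simultaneously for all $t\ge1$, $\norm{S_t}_{V_t^{-1}}^2 \le 2R^2\log\!\big(\det(V_t)^{1/2}\det(\lambda I)^{-1/2}/\delta\big)$. A deterministic determinant estimate then finishes it: since $\mathrm{tr}(V_t) = \lambda d + \sum_{s=1}^{t-1}\norm{x_s}_2^2 \le \lambda d + (t-1)L^2$ by Assumption~\ref{assmp.3}, AM--GM on the eigenvalues yields $\det(V_t)\le(\lambda + (t-1)L^2/d)^d$, hence $\det(V_t)^{1/2}\det(\lambda I)^{-1/2} \le (1 + (t-1)L^2/(\lambda d))^{d/2} \le (1+(t-1)L^2/\lambda)^{d/2}$. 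Substituting and using $\sqrt{a+b}\le\sqrt a+\sqrt b$ gives $\norm{S_t}_{V_t^{-1}} \le R\sqrt{d\log((1+(t-1)L^2/\lambda)/\delta)}$, and adding the $\sqrt\lambda\,S$ term recovers exactly $\beta_t(\delta)$ of \eqref{eq:beta_t}; thus $\theta_\star\in\mathcal{E}_t(\delta)$ for all $t$ on this event.

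Finally I would repeat the identical argument for $\hat\mu_t$ with $(r_s,\xi_s)$ replaced by $(w_s,\zeta_s)$ — legitimate because $\zeta_s$ is also conditionally $R$-sub-Gaussian and $\norm{\mu_\star}_2\le S$ — and take a union bound over the two events to obtain both inclusions; the budget split in Algorithm~\ref{alg:safe.Ts} ($\delta'=\delta/6$) is precisely what later absorbs this (and further) union bounds. The only genuinely non-elementary ingredient is the self-normalized martingale inequality used above; since it is exactly the content of the cited theorem, the real work here is the reduction — verifying that both regressions meet the same hypotheses — together with the routine linear algebra of the error decomposition and the trace/determinant bound. I expect the main obstacle, if one insists on a self-contained proof, to be the supermartingale construction and the optional-stopping step underlying that concentration inequality.
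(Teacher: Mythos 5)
Your proposal is correct. The paper does not prove this theorem at all --- it imports it verbatim from \cite{abbasi2011improved} --- and your reconstruction (error decomposition into the self-normalized martingale term plus the $\lambda V_t^{-1}\theta_\star$ regularization bias, the method-of-mixtures tail bound, the trace/AM--GM determinant estimate, and the two-fold application with a union bound, whose $1-2\delta$ cost is indeed absorbed by the $\delta'=\delta/6$ budget in Definition~\ref{def.events} and Lemma~\ref{first.event.bound}) is exactly the standard argument underlying the cited result, so there is nothing to compare against and nothing missing.
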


\subsubsection{Background on LTS: a frequently optimistic algorithm}
Our algorithm inherits the frequentist view of LTS first introduced in \cite{agrawal2013thompson,abeille2017linear}, which is essentially defined as a \emph{randomized} algorithm over the RLS-estimate $\hat{\theta}_t$ of the unknown parameter $\theta_{\star}$. Specifically, at any round $t$, the randomized algorithm of \cite{agrawal2013thompson,abeille2017linear} samples a parameter $\Tilde{\theta}_t$ centered at $\hat{\theta}_t$:
\begin{align}
    \Tilde{\theta}_t = \hat{\theta}_t + \beta_t(\delta') V_t^{-\frac{1}{2}} \eta_t, \label{theta.tilde}
\end{align} 
and chooses the action $x_t$ that is best with respect to the new sampled parameter, i.e., maximizes the objective $x_t^{\top}\Tilde{\theta}_t$. The key idea of \cite{agrawal2013thompson,abeille2017linear} on how to select the random perturbation $\eta_t\in \mathbb{R}^d$ to guarantee good regret performance is as follows. On the one hand, $\Tilde{\theta}_t$ must stay close enough to the RLS-estimate $\hat{\theta}_t$ so that $x_t^{\top}\Tilde{\theta}_t$ is a good proxy for the true (but unknown) reward $x_t^{\top}{\theta}_\star$. Thus, $\eta_t$ must satisfy an appropriate \emph{concentration} property. On the other hand, $\Tilde{\theta}_t$ must also favor exploration in a sense that it leads --often enough-- to  actions $x_t$ that are \emph{optimistic}, i.e., they satisfy 
\begin{align}\label{eq:optimism}
    x_t^{\top}\Tilde{\theta}_t\geq x_\star^{\top}\theta_\star
\end{align}
Thus, $\eta_t$ must satisfy an appropriate \emph{anti-concentration} property.

Our proposed Algorithm \ref{alg:safe.Ts} also builds on these two key ideas. However, we discuss next how the safe setting imposes additional challenges and how our algorithm and its analysis manages to address them.

\subsubsection{Addressing challenges in the safe setting} \label{challenges of safety}

Compared to the classical linear bandit setting studied in \cite{agrawal2013thompson,abeille2017linear}, the presence of the safety constraint raises the following two questions: 

\noindent \emph{(i)} How to guarantee actions played at each round are safe? 

\noindent \emph{(ii)} In the face of the safety restrictions, how can optimism (cf. \eqref{eq:optimism}) be maintained?

In the rest of this section, we explain the mechanisms that Safe-LTS  Algorithm \ref{alg:safe.Ts} employs to address both of these challenges.

\noindent\textbf{Safety -}~First, the chosen action $x_t$ at each round need not only maximize $x_t^{\top}\Tilde{\theta}_t$, but also, it needs to be safe. Since the learner does not know the safe action set $\mathcal{D}_0^s$, Algorithm \ref{alg:safe.Ts} performs conservatively and guarantees safety as follows. After creating the confidence region $\mathcal{C}_t$ around the RLS-estimate $\hat{\mu}_t$, it forms the so-called {\it safe decision set at round $t$} denoted as $\mathcal{D}_t^s$:
\begin{align}
    \mathcal{D}_t^s = \{ x \in  \mathcal{D}_0 : x^{\top} v \leq C, \forall v \in \mathcal{C}_t \}. \label{safe.decision.set}
\end{align}
Then, the chosen action is optimized over only the subset $\mathcal{D}_t^s$, i.e.,
\begin{align}
    x_t = \arg\max_{x \in \mathcal{D}_t^s} x^{\top} \Tilde{\theta}_t. \label{played.action}
\end{align}
We make the following two important remarks about the set $\mathcal{D}_t^s$ defined in \eqref{safe.decision.set}. On a positive note, $\mathcal{D}_t^s$ is easy to compute. To see this, note the following equivalent definitions:
\begin{align}
    {\mathcal{D}}_t^s & := \{ x \in \mathcal{D}_0 : x^{\top} v \leq C , \forall v \in {\mathcal{C}}_t \} \nonumber\\&= \{x \in \mathcal{D}_0 : \max_{v \in {\mathcal{C}}_t} x^{\top} v \leq C  \} \nonumber\\& = \{x \in \mathcal{D}_0 : x^{\top} \hat{\mu}_t + \beta_t(\delta') \norm{x}_{V_t^{-1}} \leq C  \}.  \label{quadratic.const}
\end{align}
Thus, in view of \eqref{quadratic.const}, the optimization in \eqref{played.action} is an efficient convex quadratic program. The challenge with $\mathcal{D}_t^s$ is that it contains actions which are safe with respect to \emph{all} the parameters in $\mathcal{C}_t$, and not only $\mu_{\star}$. As such, it is only an \emph{inner} approximation of the true safe set $\mathcal{D}_0^s$. As we will see next, this fact complicates the requirement for optimism.

\noindent\textbf{Optimism in the face of safety -}~As previously discussed, in order to guarantee safety, Algorithm \ref{alg:safe.Ts} chooses actions from the subset $\mathcal{D}_t^s$. This is only an inner approximation of the true safe set $\mathcal{D}_0^s$, a fact that makes it harder to maintain optimism of $x_t$ as defined in \eqref{eq:optimism}. To see this, note that in the classical setting of \cite{abeille2017linear}, their algorithm would choose $x_t$ as the action that maximizes $\Tilde{\theta}_t$ over the \emph{entire} set $\mathcal{D}_0$. In turn, this would imply that $x_t^{\top}\Tilde{\theta}_t\geq x_\star^{\top}\Tilde{\theta}_t$ because $x_\star$ belongs to the feasible set $\mathcal{D}_0$. This  observation is the critical first argument in proving that $x_t$ is optimistic often enough, i.e., \eqref{eq:optimism}  holds with fixed probability $p>0$. 

Unfortunately, in the presence of safety constraints, the action $x_t$ is a maximizer over only the subset $\mathcal{D}_t^s$. Since $x_{\star}$ may \emph{not} lie within $\mathcal{D}_t^s$, there is no guarantee that $x_t^{\top}\Tilde{\theta}_t\geq x_\star^{\top}\Tilde{\theta}_t$ as before. So, how does then one guarantee optimism? 

Intuitively, at the first rounds, the estimated safe set $\mathcal{D}_t^s$ is only a small subset of the true $\mathcal{D}_0^s$. Thus, $x_t \in \mathcal{D}_t^s$ is a vector of small norm compared to that of $x_\star\in\mathcal{D}_0^s$. Thus, for \eqref{eq:optimism} to hold, it must be that $\Tilde{\theta}_t$ is not only in the direction of $\theta_\star$, but it also has larger norm than that. To satisfy this latter requirement, the random vector $\eta_t$ must be large; hence, it will ``anti-concentrate more". As the algorithm progresses, and --thanks to side-information measurements-- the set $\mathcal{D}_t^s$ becomes an increasingly better approximation of $\mathcal{D}_0^s$, the requirements on anti-concentration of $\eta_t$ become the same as if no safety constraints were present. Overall, at least intuitively, we might hope that optimism is possible in the face of safety, but only provided that $\eta_t$ is set to satisfy a stronger (at least at the first rounds) anti-concentration property than that required by \cite{abeille2017linear} in the classical setting.

At the heart of Algorithm \ref{alg:safe.Ts} and its proof of regret lies an analytic argument that materializes the intuition described above. Specifically, we will prove that optimism is possible in the presence of safety at the cost of a stricter anti-concentration property compared to that specified in \cite{abeille2017linear}. While the proof of this fact is deferred to Section \ref{sketch.of.the.proof}, we now summarize the appropriate distributional properties that provably guarantee good regret performance of Algorithm \ref{alg:safe.Ts} in the safe setting.

\begin{definition}[] \label{definition.noise}
In Algorithm \ref{alg:safe.Ts}, the random vector $\eta_t$ is sampled IID at each round $t$ from  a multivariate distribution $\mathcal{H}^{\text{TS}}$ on $\mathbb{R}^{d}$ that is absolutely continuous with respect to the {\it Lebesgue} measure and satisfies the following properties:\\
\noindent~~\emph{Anti-concentration:} There exists a strictly positive probability $p>0$ such that for any $u \in \mathbb{R}^d$ with $\norm{u}_2 = 1$, 
    \begin{align}
        \mathbb{P} \big( u^{\top} \eta_t \geq 1 + \frac{2}{C} LS  \big) \geq p. \label{anti.concen}
     \end{align}
     
\noindent\emph{Concentration:} There exists positive constants $c,c'>0$ such that $\forall \delta \in (0,1)$, 
    \begin{align}
        \mathbb{P} \big(  \norm{ \eta_t}_2 \leq \big( 1 + \frac{2}{C} LS \big)  & \sqrt{c d \log{(\frac{c' d}{\delta})}} \,\big)  \geq 1 - \delta \label{concen.}.
    \end{align}
\end{definition}

In particular, the difference to the distributional assumptions required by \cite{abeille2017linear} in the classical setting is the extra term $\frac{2}{C}LS$ in \eqref{anti.concen} (naturally, the same term affects the concentration property \eqref{concen.}).
%We will see in Section \ref{sketch.of.the.proof} that this extra term $\frac{2}{C}LS$ is critical (and non-trivial) in order to provide the regret guarantee for the more challenging problem of LTS with side constraints. 

Our proof of regret in Section \ref{regret analysis} shows that this extra term captures an appropriate notion of the distance between the approximation $\mathcal{D}_t^s$ (where $x_t$ lives) and the true safe set $\mathcal{D}_0^s$ (where $x_\star$ lives), and provides enough exploration for the sampled parameter $\Tilde{\theta}_t$ so that actions in $\mathcal{D}_t^s$ can be optimistic. While this intuition can possibly explain the need for an additive term in Definition \ref{definition.noise}, it is insufficient when it comes to determining what the ``correct" value for it should be. This is determined by our analytic treatment in Section \ref{sketch.of.the.proof}.

As a closing remark of this section, note that the properties in Definition \ref{definition.noise} are satisfied by a number of easy-to-sample from distributions. For instance, it is satisfied by a multivariate zero-mean IID Gaussian distribution with all entries having a (possibly time-dependent) variance $1+ \frac{2}{C}LS$. 

\section{Regret Analysis}\label{regret analysis}

In this section, we present our main result, a tight regret bound for Safe-LTS, and discuss key proof ideas. 

Since $\mu_{\star}$ is unknown, the learner does not know the safe action set $\mathcal{D}_0^s$. Therefore, in order to satisfy the safety constraint \eqref{safty.const},  Algorithm \ref{alg:safe.Ts} chooses actions from $\mathcal{D}_t^s$, which is a conservative inner approximation of $\mathcal{D}_0^s$. Moreover, at the heart of the action selection rule, is the sampling of an appropriate random perturbation $\Tilde\theta_t$ of the RLS estimate $\hat\theta_t$. The sampling rule in \eqref{theta.tilde} is almost the same as in the classical setting \cite{abeille2017linear}, but with a key difference on the distribution $\mathcal{H}^{TS}$ of the perturbation vector $\eta_t$ (cf. Definition \ref{definition.noise}). As explained in Section \ref{sketch.of.the.proof}, the modification compared to \cite{abeille2017linear} is necessary to guarantee that actions are frequently optimistic (see \eqref{optimisitc}) in spite of limitations on actions imposed because of the safety constraints. 
In this section, we put these pieces together by proving that the action selection rule of Safe-LTS  is simultaneously: 1) frequently optimistic, and, 2) guarantees a proper expansion of the estimated safe set. Our main result stated as Theorem \ref{Main.theorem} is perhaps surprising: in spite of the additional safety constraints, Safe-LTS has regret $\mathcal{O}(\sqrt{T} \log^{\frac{3}{2}} T)$ that is order-wise the same as that in the classical setting \cite{agrawal2013thompson,abeille2017linear}.

\begin{theorem}[Regret of Safe-LTS] \label{Main.theorem}
Let $\lambda \geq 1$.  Under Assumptions \ref{assmp.1}, \ref{assmp.2}, \ref{assmp.3}, the regret $R(T)$ of Safe-LTS Algorithm \ref{alg:safe.Ts} is upper bounded with probability $ 1- \delta$ as follows: 
  \begin{align}
      R(T)&\leq \big( \beta_T(\delta') + \gamma_T(\delta') (1+\frac{4}{p}) \big) \sqrt{2 T d \log{ (1 + \frac{T L^2}{\lambda})}} \nonumber \\& + \frac{4 \gamma_T(\delta')}{p} \sqrt{\frac{8T L^2}{\lambda} \log{\frac{4}{\delta}}},
  \end{align} where $\delta' = \frac{\delta}{6T}$, $\beta_t(\delta')$ as in \eqref{eq:beta_t} and, 
  %\begin{align}
    $  \gamma_t(\delta') = \beta_t(\delta') \big(  1 + \frac{2}{C}LS \big) \sqrt{c d \log{(\frac{c' d}{\delta})}}\,. $
      %\label{gamma}
  %\end{align}
  \end{theorem}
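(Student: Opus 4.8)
The plan is to follow the now-standard decomposition of Thompson Sampling regret into two pieces: a term arising from the gap between the sampled parameter $\Tilde\theta_t$ and the true $\theta_\star$ (a ``prediction error'' term, controlled by concentration), and a term arising from the gap between optimistic value and realized value (a ``regret-to-go'' term, controlled by anti-concentration/optimism). First I would condition on the high-probability event $\widehat E$ that $\theta_\star\in\mathcal E_t$ and $\mu_\star\in\mathcal C_t$ for all $t\le T$, which holds with probability at least $1-2\delta'$ by Theorem~\ref{abbasi.2}; on this event $\mu_\star\in\mathcal C_t$ guarantees $\mathcal D_t^s\subseteq\mathcal D_0^s$, so every played action is safe, and moreover $x_\star$'s counterpart inside $\mathcal D_t^s$ can be related to $x_\star$. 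I would also record the easy deterministic bound $\|\Tilde\theta_t-\hat\theta_t\|_{V_t}=\beta_t(\delta')\|\eta_t\|_2\le\gamma_t(\delta')$ on the complementary-to-a-$\delta'$ event where the concentration property \eqref{concen.} holds, so that $\Tilde\theta_t\in\mathcal E_t^{\mathrm{TS}}$, an enlarged ellipsoid of radius $\beta_t+\gamma_t$; union-bounding over $t$ costs another factor in $\delta$, which is why $\delta'=\delta/(6T)$.

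Next I would decompose the per-round regret. Write $R(T)=\sum_t (x_\star^\top\theta_\star - x_t^\top\Tilde\theta_t) + \sum_t (x_t^\top\Tilde\theta_t - x_t^\top\theta_\star)$. The second sum is the prediction-error term: since both $\theta_\star$ and $\Tilde\theta_t$ lie in ellipsoids centered at $\hat\theta_t$, $|x_t^\top(\Tilde\theta_t-\theta_\star)|\le(\beta_t(\delta')+\gamma_t(\delta'))\|x_t\|_{V_t^{-1}}$, and summing $\|x_t\|_{V_t^{-1}}$ via Cauchy--Schwarz and the elliptical potential / log-determinant lemma (as in \cite{abbasi2011improved}) yields $(\beta_T+\gamma_T)\sqrt{2Td\log(1+TL^2/\lambda)}$. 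The first sum is the optimism-gap term and is the crux: I would invoke the optimism lemma established in Section~\ref{sketch.of.the.proof} — that, conditioned on the filtration, $x_t^\top\Tilde\theta_t\ge x_\star^\top\theta_\star$ with probability at least $p$ (this is exactly what the strengthened anti-concentration bound \eqref{anti.concen} with its $\tfrac{2}{C}LS$ correction buys us, accounting for the fact that $x_t$ ranges only over the inner approximation $\mathcal D_t^s$) — and then run the Abeille--Lazaric argument: on the optimistic event the term is $\le 0$, and a lower bound on the conditional expectation of $x_t^\top\Tilde\theta_t$ (restricted to $\mathcal E_t^{\mathrm{TS}}$) transfers control to $\mathbb E[\,\|x_t\|_{V_t^{-1}}\mid\mathcal F_t]$, producing the $\tfrac{4}{p}\gamma_T$ multiplier.

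Assembling, the first sum contributes roughly $\tfrac{4}{p}\gamma_T(\delta')\big(\sum_t\|x_t\|_{V_t^{-1}} + \text{martingale deviation}\big)$; bounding $\sum_t\|x_t\|_{V_t^{-1}}$ again by the potential lemma gives another $\sqrt{2Td\log(1+TL^2/\lambda)}$ contribution, while the deviation between $x_t^\top\Tilde\theta_t$ and its conditional mean is a bounded-difference martingale (each increment $O(\gamma_t \|x_t\|_{V_t^{-1}})$ with $\|x_t\|_{V_t^{-1}}\le L/\sqrt\lambda$), so Azuma--Hoeffding contributes the $\tfrac{4\gamma_T(\delta')}{p}\sqrt{8TL^2/\lambda\,\log(4/\delta)}$ term. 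Collecting the $\beta_T+\gamma_T(1+\tfrac4p)$ coefficient on the $\sqrt{2Td\log(1+TL^2/\lambda)}$ factor plus the martingale term gives exactly the claimed bound, with the six $\delta'$-sized bad events (two from Theorem~\ref{abbasi.2}, two from concentration of $\eta_t$, one from optimism-related martingale, one from the Azuma term — or a similar accounting) absorbed into $\delta=6T\delta'$.

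I expect the main obstacle to be the optimism-gap term: specifically, verifying that the strengthened anti-concentration property \eqref{anti.concen} is exactly strong enough so that, even though $x_t$ is optimized over the shrunken set $\mathcal D_t^s$ rather than all of $\mathcal D_0$, one still gets $\mathbb P(x_t^\top\Tilde\theta_t\ge x_\star^\top\theta_\star\mid\mathcal F_t)\ge p$ with a $p$ that does \emph{not} degrade over time. This requires quantifying how well $\mathcal D_t^s$ approximates $\mathcal D_0^s$ — in particular exhibiting, for the (rescaled) optimal direction, a point in $\mathcal D_t^s$ whose shortfall relative to $x_\star$ is controlled by the $\tfrac{2}{C}LS\,\|x\|_{V_t^{-1}}$-type slack coming from the safety margin $C>0$ and the boundedness constants $L,S$ — and it is where the $\tfrac{2}{C}LS$ term in Definition~\ref{definition.noise} originates; this is deferred to Section~\ref{sketch.of.the.proof} and I would cite it as a lemma here.
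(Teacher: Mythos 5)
Your proposal is correct and follows essentially the same route as the paper: the same Term I / Term II decomposition, the same high-probability events ($\theta_\star\in\mathcal{E}_t$, $\mu_\star\in\mathcal{C}_t$, $\Tilde\theta_t\in\mathcal{E}_t^{\text{TS}}$) with $\delta'=\delta/(6T)$, the elliptical-potential bound for Term II, and the optimism-with-constant-probability lemma plus the Abeille--Lazaric reintegration and Azuma argument for Term I, with the $\tfrac{2}{C}LS$-strengthened anti-concentration handling the inner approximation $\mathcal{D}_t^s$ exactly as in the paper's Lemma~\ref{optimitic.lemma}. The only cosmetic difference is that the paper's formal optimism lemma yields probability $p/2$ (after intersecting with the concentration event), which is where the $4/p$ factor arises, but your accounting reaches the same constants.
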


A detailed proof of Theorem \ref{Main.theorem} is deferred to Appendix \ref{appdnx.proof.of.theorem}. In the rest of the section, we highlight the key changes compared to previous proofs in \cite{agrawal2013thompson,abeille2017linear} that occur due to the safety constraint. To begin, let us  consider the following standard decomposition of the cumulative regret $R(T)$: 
\begin{align}
  %& R(T)   \leq  
  %\sum_{t=1}^T \left(x^{\top}_{\star} \theta_{\star} - x_t^{\top} \theta_{\star}\right) = 
  %\nonumber\\&
  \sum_{t=1}^{T} \big( \underbrace{ x_{\star}^{\top}\theta_{\star} - x_t^{\top}\Tilde{\theta}_t}_{\text{Term I}} \big)  +  \sum_{t=1}^T \big( \underbrace{ x_t^{\top}\Tilde{\theta}_t - x_t^{\top} \theta_{\star}}_{\text{Term II}} \big). \label{overall.regret}
\end{align} 
Regarding Term II, the concentration property of $\mathcal{H}^{TS}$ guarantees that $\Tilde{\theta}_t$ is close to $\hat{\theta}_t$, and consequently, close to $\theta_\star$ thanks to Theorem \ref{abbasi.2}. Therefore, controlling Term II can be done similar to previous works e.g., \cite{abbasi2011improved,abeille2017linear}; see Appendix \ref{appndix.bound.term II} for more details. Next, we focus on Term I.

To see how the safety constraints affect the proofs let us first review the treatment of Term I in the classical setting. For UCB-type algorithms, Term I is always non-positive since the pair $(\Tilde{\theta}_t,x_t)$ is optimistic at each round $t$ by design \cite{Dani08stochasticlinear,Tsitsiklis,abbasi2011improved}. For LTS, Term I can be positive; that is, \eqref{eq:optimism} may not hold at every round $t$. However, \cite{agrawal2013thompson,abeille2017linear} proved that thanks to the anti-concentration property of $\eta_t$, this optimistic property occurs often enough. Moreover, this is enough to yield a good enough bound on Term I for \emph{every} round $t$; see Appendix \ref{sec:sketch_app}.

As discussed in Section \ref{challenges of safety}, the requirement for safety complicates the requirement for optimism. Our main technical contribution, detailed in the next section, is to show that the properly modified anti-concentration property in Definition \ref{definition.noise} together with the construction of approximated safe sets as in \eqref{quadratic.const} can yield frequently optimistic actions even in the face of safety. Specifically, it is the extra term $\frac{2}{C}LS$ in \eqref{anti.concen} that allows enough  exploration to the sampled parameter $\Tilde{\theta}_t$ in order to compensate for safety limitations on the chosen actions, and because of that we are able to show  Safe-LTS obtains the same order of regret as that of \cite{abeille2017linear}.

 %%%%%%%%%%%%%%%%%%%%%%%%%%%%%%%%%%%%%%%%%%%%%%%%%%%%%%%%%%%%
\subsection{Proof sketch: Optimism despite safety constraints}\label{sketch.of.the.proof}
%%%%%%%%%%%%%%%%%%%%%%%%%%%%%%%%%%%%%%%%%%%%%%%%%%%%%%%%%%%%%%%

We prove that Safe-LTS samples a parameter $\Tilde{\theta}_t$ that is optimistic with constant probability. The next lemma informally characterizes this claim (see the formal statement of the lemma and its proof in Appendix \ref{appndx.proof.of.lema}).
\begin{lemma}(Optimism in the face of safety; Informal)\label{optimitic.lemma}
For any round $t\geq 1$, Safe-LTS samples a parameter $\Tilde{\theta}_t$ and chooses an action $x_t$ such that the pair $(\Tilde{\theta}_t,x_t)$ is optimistic frequently enough, i.e.,
 \begin{align}
    \mathbb{P} \left(x_t^{\top} \Tilde{\theta}_t \geq x_{\star}^{\top} \theta_{\star} \right) \geq p, \label{optimitic.frequency} 
\end{align}
where $p>0$ is the probability with which the anti-concentration property \eqref{anti.concen} holds.
\end{lemma}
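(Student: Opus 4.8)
The plan is to adapt the optimism argument of \cite{abeille2017linear} to the safe setting. The obstacle is that $x_t$ is optimized over the conservative set $\mathcal{D}_t^s$, whereas $x_\star$ --- the natural ``optimistic witness'' --- generally does \emph{not} lie in $\mathcal{D}_t^s$; I would therefore replace $x_\star$ by a suitably \emph{shrunk} copy $\bar x_t=\rho_t x_\star$ that is provably feasible for the program \eqref{played.action}. Throughout, I condition on $\mathcal{F}_{t-1}$ and restrict to the good event $\hat E_t:=\{\theta_\star\in\mathcal{E}_t\}\cap\{\mu_\star\in\mathcal{C}_t\}$, which by Theorem \ref{abbasi.2} holds with high probability and is $\mathcal{F}_{t-1}$-measurable; given $\mathcal{F}_{t-1}$ the quantities $\hat\theta_t,\hat\mu_t,V_t,\beta_t(\delta')$ are deterministic, on $\hat E_t$ the concentration bounds of Theorem \ref{abbasi.2} moreover hold, and the only remaining randomness is the fresh draw $\eta_t\sim\mathcal{H}^{\mathrm{TS}}$ (this conditioning is the content of the formal version of \eqref{optimitic.frequency}).

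A short reduction handles degenerate cases: since $0\in\mathcal{D}_0^s$ we have $x_\star^\top\theta_\star\ge 0$, and since $0\in\mathcal{D}_t^s$ we have $x_t^\top\tilde\theta_t\ge 0$, so the claim is trivial if $x_\star=0$; otherwise $\norm{x_\star}_{V_t^{-1}}>0$. Set
\[
  \rho_t:=\frac{C}{\,C+2\beta_t(\delta')\norm{x_\star}_{V_t^{-1}}\,}\in(0,1],\qquad \bar x_t:=\rho_t\,x_\star.
\]
The first key step is $\bar x_t\in\mathcal{D}_t^s$: convexity of $\mathcal{D}_0$ with $0,x_\star\in\mathcal{D}_0$ gives $\bar x_t\in\mathcal{D}_0$; and on $\hat E_t$, Cauchy--Schwarz in the $V_t$-geometry together with $\mu_\star\in\mathcal{C}_t$ and $x_\star^\top\mu_\star\le C$ yields $x_\star^\top\hat\mu_t+\beta_t(\delta')\norm{x_\star}_{V_t^{-1}}\le C+2\beta_t(\delta')\norm{x_\star}_{V_t^{-1}}$, so multiplying through by $\rho_t$ and invoking the explicit description \eqref{quadratic.const} of $\mathcal{D}_t^s$ shows $\bar x_t\in\mathcal{D}_t^s$.

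Since $x_t$ maximizes $x^\top\tilde\theta_t$ over $\mathcal{D}_t^s\ni\bar x_t$, we get $x_t^\top\tilde\theta_t\ge\bar x_t^\top\tilde\theta_t=\rho_t\,x_\star^\top\tilde\theta_t$. Expanding $\tilde\theta_t=\hat\theta_t+\beta_t(\delta')V_t^{-1/2}\eta_t$, bounding $x_\star^\top\hat\theta_t\ge x_\star^\top\theta_\star-\beta_t(\delta')\norm{x_\star}_{V_t^{-1}}$ via $\theta_\star\in\mathcal{E}_t$, and setting $u:=V_t^{-1/2}x_\star/\norm{x_\star}_{V_t^{-1}}$ --- a unit vector that is deterministic given $\mathcal{F}_{t-1}\cap\hat E_t$ --- gives $x_\star^\top\tilde\theta_t\ge x_\star^\top\theta_\star+\beta_t(\delta')\norm{x_\star}_{V_t^{-1}}(u^\top\eta_t-1)$. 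Plugging this into $x_t^\top\tilde\theta_t\ge\rho_t\,x_\star^\top\tilde\theta_t$ and substituting the value of $\rho_t$, one checks by elementary algebra that $x_t^\top\tilde\theta_t\ge x_\star^\top\theta_\star$ is \emph{equivalent} to $u^\top\eta_t\ge 1+\frac{2}{C}\,x_\star^\top\theta_\star$. By Assumptions \ref{assmp.2} and \ref{assmp.3}, $x_\star^\top\theta_\star\le\norm{x_\star}_2\norm{\theta_\star}_2\le LS$, so $\{u^\top\eta_t\ge 1+\frac{2}{C}LS\}\subseteq\{u^\top\eta_t\ge 1+\frac{2}{C}x_\star^\top\theta_\star\}$; the anti-concentration property \eqref{anti.concen}, applied to the fixed unit vector $u$ conditionally on $\mathcal{F}_{t-1}\cap\hat E_t$, then gives $\mathbb{P}(x_t^\top\tilde\theta_t\ge x_\star^\top\theta_\star\mid\mathcal{F}_{t-1})\ge p$ on $\hat E_t$, which is \eqref{optimitic.frequency}.

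I expect the crux --- and the part that genuinely departs from \cite{abeille2017linear} --- to be choosing the shrinkage factor $\rho_t$ and verifying that the final algebraic step collapses \emph{exactly} to $u^\top\eta_t\ge 1+\frac{2}{C}x_\star^\top\theta_\star$: this is what pins down the additive constant $\frac{2}{C}LS$ in the anti-concentration property of Definition \ref{definition.noise} (a smaller inflation of $\eta_t$ would not suffice, a larger one would be wasteful and worsen the regret). A more routine difficulty is the measurability bookkeeping --- ensuring $u$ is a fixed unit vector once $\mathcal{F}_{t-1}$ and $\hat E_t$ are conditioned on, so that \eqref{anti.concen} legitimately applies --- and carrying the conditioning on $\hat E_t$ correctly through to the (informally stated) conclusion \eqref{optimitic.frequency}.
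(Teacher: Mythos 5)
Your proposal is correct and follows essentially the same route as the paper's proof in Appendix \ref{appndx.proof.of.lema}: your shrinkage factor $\rho_t$ is exactly the paper's $\alpha_t$ from \eqref{alpha_t}, and the reduction via feasibility of $\rho_t x_\star$, optimality of $x_t$ over $\mathcal{D}_t^s$, the RLS concentration bound, and the bound $x_\star^{\top}\theta_\star \le LS$ down to the anti-concentration event \eqref{anti.concen} is identical (the paper merely routes the feasibility check through the explicit shrunk set $\Tilde{\mathcal{D}}_t^s$). One small wording fix: the final algebraic step gives that $u^{\top}\eta_t \ge 1+\frac{2}{C}x_\star^{\top}\theta_\star$ is \emph{sufficient} for (not equivalent to) $x_t^{\top}\Tilde{\theta}_t \ge x_\star^{\top}\theta_\star$, since the concentration bound on $x_\star^{\top}(\theta_\star-\hat{\theta}_t)$ is an inequality; this is the direction you need, so the argument stands.
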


The challenge in the proof is that  the actions $x_t$ are chosen from the estimated safe set $\mathcal{D}_t^s$, which does not necessarily contain all feasible actions and hence, {\it may not contain $x_{\star}$}. Therefore, we need a mechanism to control the distance of the optimal action $x_{\star}$ from the optimistic actions that can only lie within the subset $\mathcal{D}_t^s$ ({\it distance} is defined here in terms of an inner product with the optimistic parameters $\Tilde{\theta}_t$).
Unfortunately, we do not have a direct control on this distance term and so at the heart of the proof lies the idea of identifying a ``good'' feasible action $\tilde{x}_t\in\mathcal{D}_t^s$ whose distance to $x_{\star}$ is easier to control.

To be concrete, we show that it suffices to choose the good feasible point in the direction of $x_\star$, i.e.,  $\tilde{x}_t=\alpha_t x_\star$, where the key parameter $\alpha_t \in (0,1]$ must be set to satisfy $\tilde{x}_t\in\mathcal{D}_t^s$.  Naturally, the value of $\alpha_t$ is determined by the approximated safe set $\mathcal{D}_t^s$ as defined in \eqref{quadratic.const}. The challenge though is that we do not know how the value of $x^{\top}_*\hat{\mu}_t$ compares to the constant $C$. We circumvent this issue by introducing an enlarged confidence region centered at $\mu_{\star}$ as \begin{align}
    \Tilde{\mathcal{C}}_t  := \{ v \in \mathbb{R}^d : \norm{v - \mu_{\star}}_{V_t} \leq 2 \beta_t(\delta')  \}, \nonumber
\end{align}and the corresponding shrunk safe decision set as 
\begin{align}
    \Tilde{\mathcal{D}}_t^s & := \{ x \in \mathcal{D}_0 : x^{\top} v \leq C , \forall v \in \Tilde{\mathcal{C}}_t \}   \label{quadratic.const.shrunk}\\& = \{x \in \mathcal{D}_0 : x^{\top} \mu_{\star} + 2 \beta_t(\delta') \norm{x}_{V_t^{-1}} \leq C  \} \subseteq {\mathcal{D}}_t^s .\nonumber
\end{align}
Notice that the shrunk safe set is defined with respect to an ellipsoid centered at $\mu_{\star}$ (rather than at $\hat{\mu}_t$). This is convenient since $x^{\top}_{\star}\mu_{\star}\leq C $. Using this, it can be easily checked that the following choice of $\alpha_t$:
\begin{align}
    {\alpha_t} = \big(1 + \frac{2}{C}\beta_t(\delta') \norm{x_{\star}}_{V_t^{-1}}\big)^{-1}, \label{alpha}
\end{align}
ensures that 
$$
\alpha_t x_{\star} \in \Tilde{\mathcal{D}}_t^s \subseteq {\mathcal{D}}_t^s.
$$
From this, and optimality of $x_t=\arg\max_{x\in\mathcal{D}_t^s}x^{\top}\Tilde{\theta}_t$ we have that: 
\begin{align}
    x_t^{\top} \Tilde{\theta}_t \geq \alpha_t x_{\star}^{\top} \Tilde{\theta}_t \label{sketch.optimistic.projection}.
\end{align} 

Next, using \eqref{sketch.optimistic.projection}, in order to show that \eqref{optimitic.frequency} holds, it suffices to prove  that \begin{align}
    p &\leq \mathbb{P} \big( \alpha_t x_{\star}^{\top} \Tilde{\theta}_t \geq  x_{\star}^{\top} \theta_{\star} \big) \nonumber\\& = \mathbb{P} \big( x_{\star}^{\top} \Tilde{\theta}_t \geq  x_{\star}^{\top} \theta_{\star} + \frac{2}{C} \beta_t(\delta') \norm{x_{\star}}_{V_t^{-1}} x_{\star}^{\top} \theta_{\star} \big), \nonumber
\end{align}
where, in the second line, we used the definition of $\alpha_t$ in \eqref{def.alpha}. To continue, recall that $\Tilde{\theta}_t = \hat{\theta}_t + \beta_t V_t^{-\frac{1}{2}} \eta_t$.  Thus, we want to lower bound the probability of the following event:
%the  probability we want to lower bound can be equivalently rewritten as 
\begin{align}\nonumber
 %\mathbb{P} \big(
 \beta_t(\delta') x^{\top}_{\star}    V_t^{-\frac{1}{2}} &  \eta_t   \geq x^{\top}_{\star} (\theta_{\star}-\hat{\theta}_t) +  \frac{2}{C} \beta_t(\delta') \norm{x_{\star}}_{V_t^{-1}} x^{\top}_{\star} \theta_{\star}   
 %\big).  \nonumber
\end{align} 
To simplify the above, we use the following two facts: (i) $|x_{\star}^{\top} \theta_{\star}| \leq \|x_{\star}\|_2 \|\theta_{\star}\|_2 \leq LS $; (ii) $ x_{\star}^{\top}  ( \theta_{\star} - \hat{\theta}_t)\leq \|x_{\star}\|_{V_t^{-1}}\|\theta_{\star} - \hat{\theta}_t\|_{V_t} \leq \beta_t(\delta')\|x_{\star}\|_{V_t^{-1}}$, because of Cauchy-Schwartz and Theorem \ref{abbasi.2}
Put together, we need that 
\begin{align}
   p \leq \mathbb{P} \big( \beta_t(\delta') x_{\star} V_t^{-\frac{1}{2}} \eta_t   \geq & \beta_t(\delta')\|x_{\star}^{\top}\|_{V_t^{-1}} + \nonumber \\& \frac{2}{C}LS\beta_t(\delta') \norm{x_{\star}}_{V_t^{-1}}  \big), \nonumber
\end{align}
or equivalently,
\begin{align}
   p \leq \mathbb{P} \big(  u_t^\top \eta_t   \geq   1 + \frac{2}{C}LS\big), \label{eq:finally}
\end{align}
where we have defined $u_t = \frac{ V_t^{-\frac{1}{2}}x_t}{\norm{x_{\star}}_{V_t^{-1}}}$. By definition of $u_t$, note that $\norm{u_t}_2 = 1$. Hence, the desired \eqref{eq:finally} holds due to the anti-concentration property of the $\mathcal{H}^{\text{TS}}$ distribution in \eqref{anti.concen}. This completes the proof of Lemma \ref{optimitic.lemma}.

Before closing, we remark on the following differences to the proof of optimism in the classical setting as presented in Lemma 3 of \cite{abeille2017linear}. First, we present an algebraic version of the basic machinery introduced in Section 5 of \cite{abeille2017linear} that we show is convenient to extend to the safe setting. Second, we employ the idea of relating $x_t$ to a ``better" feasible point $\alpha_t x_t$ and show optimism for the latter. Third, even after introducing $\alpha_t$, the fact that $1/\alpha_t - 1$ is proportional to $\|x_{\star}\|_{V_t^{-1}}$ (see \eqref{def.alpha}) is critical for the seemingly simple algebraic steps that follow \eqref{alpha}. In particular, in deducing \eqref{eq:finally} from the expression above, note that we have divided both sides in the probability term by $\|x_\star\|_{V_t{-1}}$. It is only thanks to the proportionality observation that we made above that the term $\|x_\star\|_{V_t{-1}}$ cancels throughout and we can conclude with 
 \eqref{eq:finally} without a need to lower bound the minimum eigenvalue of the Gram matrix $V_t$ (which is known to be hard).

\section{Numerical Results and Comparison to State of the Art}\label{sec:simulations}

We present details of our numerical experiments on synthetic data. First, we show how the presence of safety constraints  affects the performance of LTS in terms of regret. Next, we evaluate Safe-LTS by comparing it against safe versions of LUCB. Then, we compare Safe-LTS to \cite{amani2019linear}'s Safe-LUCB.
In all the implementations, we used: $T = 10000 , \delta = 1/4T$, $R=0.1$ and $\mathcal{D}_0 = [-1,1]^4$. Unless otherwise specified, the reward and constraint parameters $\theta_\star$ and $\mu_\star$ are drawn from $\mathcal{N}(0,I_4)$; $C$ is drawn uniformly from $[0,1]$. Throughout, we have implemented a modified version of Safe-LUCB which uses $\ell_1$-norms instead of $\ell_2$-norms, due to computational considerations (e.g., \cite{Dani08stochasticlinear,amani2019linear}). Recall that the action selection rule of UCB-based algorithms involves solving bilinear optimization problems, whereas, TS-based algorithms (such as the one proposed here) involve simple linear objectives (see \cite{abeille2017linear}). 
%
%
% This highlights a well-known benefit associated with TS-based algorithms, namely that they are easier to implement and more computationally-efficient than UCB-based algorithms. In particular, the action selection rule in UCB-based algorithms involves solving optimization problems with bilinear objective functions, whereas, for TS-based algorithms, it would lead to linear objectives (see \cite{abeille2017linear}). 
% by first sampling the unknown parameter vector

\subsection{The  effect of safety constraints on LTS}

In Fig. \ref{regret} we compare the average cumulative regret of Safe-LTS to the standard LTS algorithm with \emph{oracle access} to the true safe set $\mathcal{D}_0^s$. The results are averages over 20 problem realizations. As shown, even though  Safe-LTS requires that chosen actions belong to the conservative  inner-approximation set $\mathcal{D}_t^s$, it still achieves a regret of the same order as the oracle reaffirming the prediction of Theorem \ref{Main.theorem}. Also, the comparison to the oracle reveals that the action selection rule of Safe-LTS is indeed such that it guarantees a fast expansion of the estimated safe set so as to not exclude optimistic actions for a long time. Fig. \ref{regret} also shows the performance of a third algorithm discussed in Sec. \ref{dyn.noise}.

   \begin{figure}
     \centering
          \includegraphics[width=0.65\linewidth]{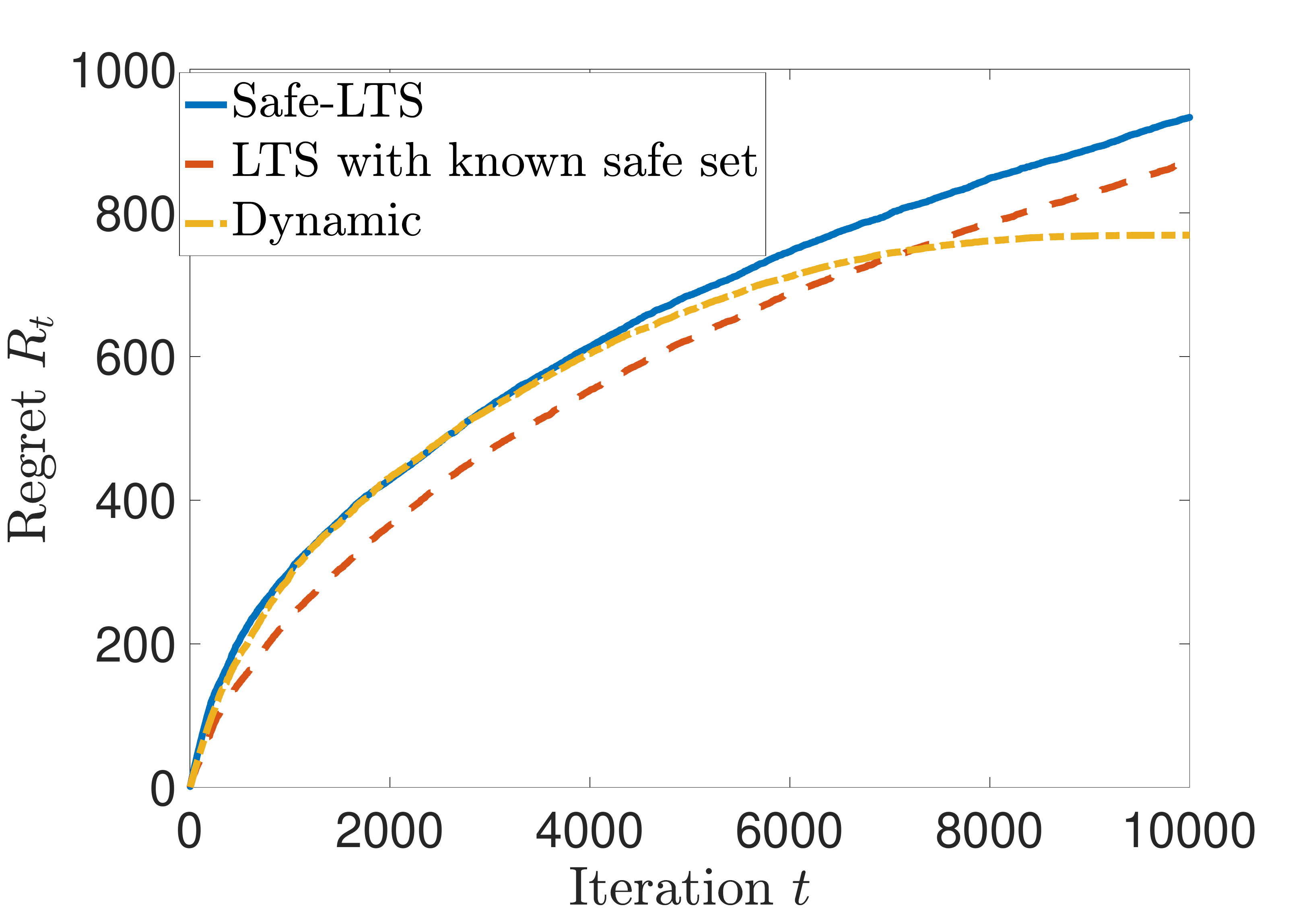}
         \caption{Comparison of the average cumulative regret of Safe-LTS vs standard LTS with oracle access to the safe set and Safe-LTS with a dynamic noise distribution described in Section \ref{dyn.noise}.}
         \label{regret}
   \end{figure}

    \begin{figure}
     \centering
          \includegraphics[width=0.65\linewidth]{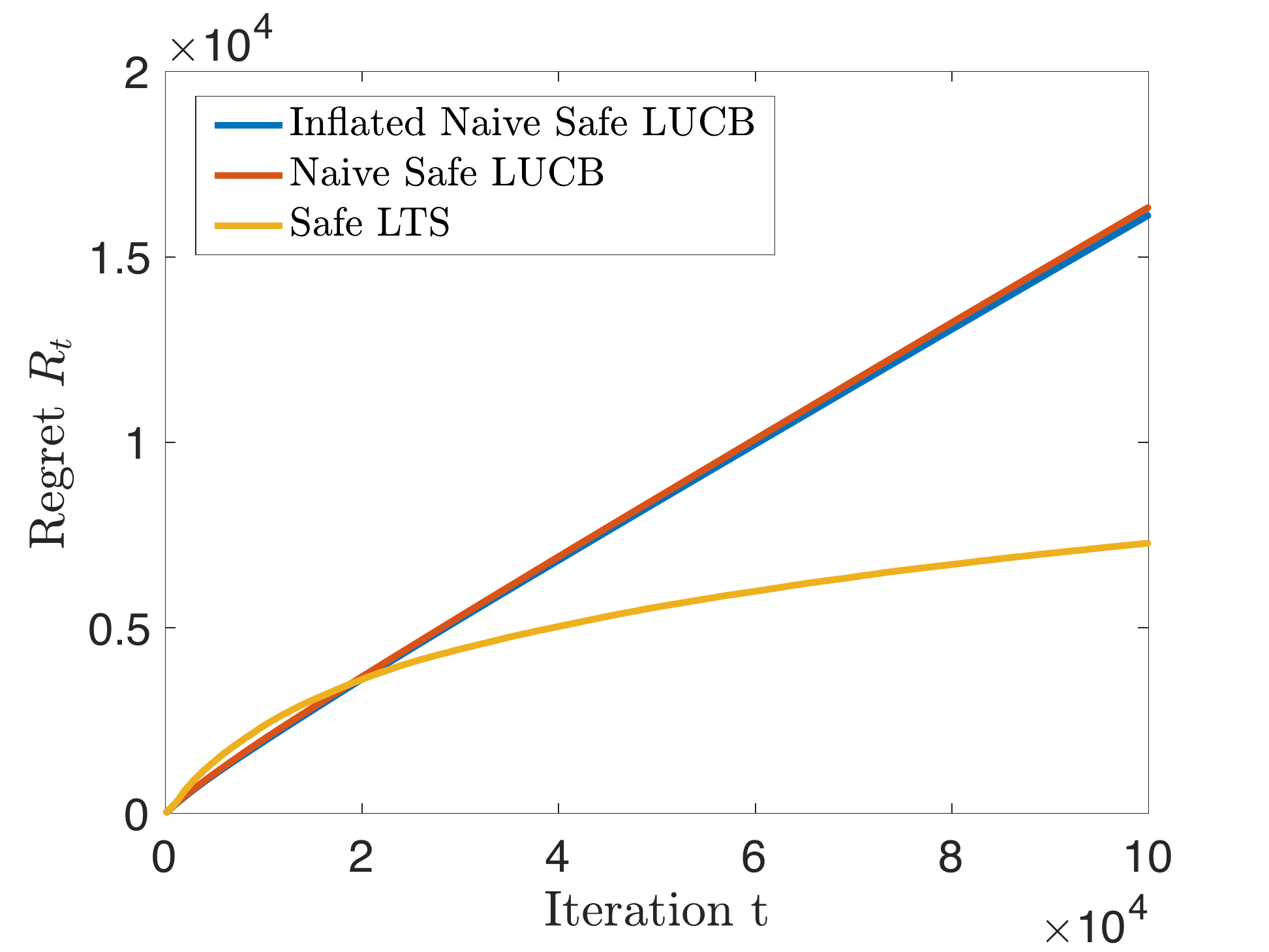}
         \caption{The cumulative regret of Safe-LTS, Naive Safe-LUCB and Inflated Naive Safe-LUCB for a specific problem instance.  }
         \label{inflated_regret}
  \end{figure}

\subsection{Comparison to safe  versions of LUCB}\label{comparison with safe-ucb}
Here, we  compare the performance of our algorithm with two safe versions of LUCB, as follows. First, we implement a natural extension of the classical LUCB algorithm in \cite{Dani08stochasticlinear}, which we call ``Naive Safe-LUCB" and which respects safety constraints by choosing actions from the estimated safe set in \eqref{safe.decision.set}. Second, we consider an improved version, which we call ``Inflated Naive Safe-LUCB" and which is motivated by our analysis of Safe-LTS. Specifically, in light of Lemma \ref{optimitic.lemma}, we implement the improved LUCB algorithm with an inflated  confidence ellipsoid by a fraction $1+\frac{2}{C}LS$ in order to favor optimistic exploration.
In Fig. \ref{inflated_regret}, we employ these two algorithms for a specific problem instance (details in Appendix \ref{simulationssss}) showing that both fail to provide the $\Otilde(\sqrt{T})$ regret of Safe-LTS, in general. Further numerical simulations  suggest that while Safe-LTS always outperforms Naive Safe-LUCB, the Inflated Naive Safe-LUCB can have superior performance to Safe-LTS in many problem instances (see  Fig. \ref{fig:better.inflated} in Appendix \ref{simulationssss}). Unfortunately, not only is this not always the case (cf. Fig. \ref{inflated_regret}), but also we are not aware of an appropriate modification to our proofs to show this problem-dependent performance. This being said further investigations in this direction might be of interest.

%, the favorable performance of the Inflated Naive Safe LUCB algorithm  for some problem instances (see Fig. \ref{fig:better.inflated} in Section \ref{simulationssss} of Appendix) suggests that further study in this direction is warranted.

\subsection{Comparison to Safe-LUCB}\label{camparing with safe-Lucb}
In this section we  compare our algorithm and results to the Safe-LUCB algorithm of \cite{amani2019linear} that was proposed for a similar, but non-identical setting. Specifically, in \cite{amani2019linear}, the linear safety constraint involves  the \emph{same} unknown parameter vector $\theta_\star$ of the linear reward function and --in our notation-- it takes the form $x^{\top} B \theta_{\star} \leq C$, for some \emph{known} matrix $B$. As such, \emph{no} side-information measurements are needed.
First, while our proof does not show a regret of $\Otilde(\sqrt{T})$ for the setting of \cite{amani2019linear} in the general case, it does so for special cases. For example, it is not hard to see that our proofs readily extend to their setting when $B=I$. This already improves upon the $\Otilde(T^{2/3}$)  guarantee provided by \cite{amani2019linear}. Indeed, for  $B=I$, there are non-trivial instances where $C- x_*^{\top}\theta_*=0$ (i.e., the safety constraint is active), in which Safe-LUCB  suffers from a $\Otilde(T^{2/3})$ bound \cite{amani2019linear}. Second, while our proof adapts to a special case of \cite{amani2019linear}'s setting, the other way around is \emph{not} true, i.e., it is not obvious how one would modify the proof of \cite{amani2019linear} to obtain a $\Otilde(\sqrt{T})$ guarantee even in the  presence of side information. This point is highlighted by Fig. \ref{fig:safesets} that numerically compares the two algorithms for a specific problem instance with side information: $\theta_* = [0.9, 0.23]^\top$, $\mu_* = [0.55, 0.31]^T$, and $C = 0.11$ (note that the constraint is  active at the optimal). Also, see Section \ref{simulationssss} for a numerical comparison of the estimated safe-sets' expansion for the two algorithms.
Fig. \ref{fig:regretcomparison} compares Safe-LTS against Safe-LUCB and Naive Safe-LUCB over 30 problem realizations (see Section \ref{simulationssss} in Appendix for plots with standard deviation). As already pointed out in \cite{amani2019linear}, Naive Safe-LUCB generally leads to poor regret, since the LUCB action selection rule alone does not provide sufficient exploration towards safe set expansion. In contrast, Safe-LUCB is equipped with a pure exploration phase over a given seed safe set, which is shown to lead to proper safe set expansion. Our paper reveals that the inherent randomized nature of Safe-LTS is alone capable to properly expand the safe set without the need for an explicit initialization phase (during which regret grows linearly).
%The reader can verify that Naive Safe-LUCB generally leads to a poor regret. For Safe-LUCB, as pointed out in \cite{amani2019linear},
%the LUCB action selection rule alone does not provide sufficient exploration towards safe set expansion, thus requiring the algorithm to 1) have access to a seed safe set; 2) start with a pure exploration phase in order to guarantee safe set expansion. This lack of exploration is especially costly for Safe-LUCB under problem instances where the safety constraint is active.

\begin{figure}
\centering
%   \includegraphics[width=0.32\linewidth]{SL-UCB1.eps}
%     % \caption{Growth of safe sets in Safe-LUCB}   
%   \includegraphics[width=0.32\linewidth]{SL-TS1.eps}
%     %   \caption{Growth of safe sets in Safe-LTS}
\includegraphics[width=0.65\linewidth]{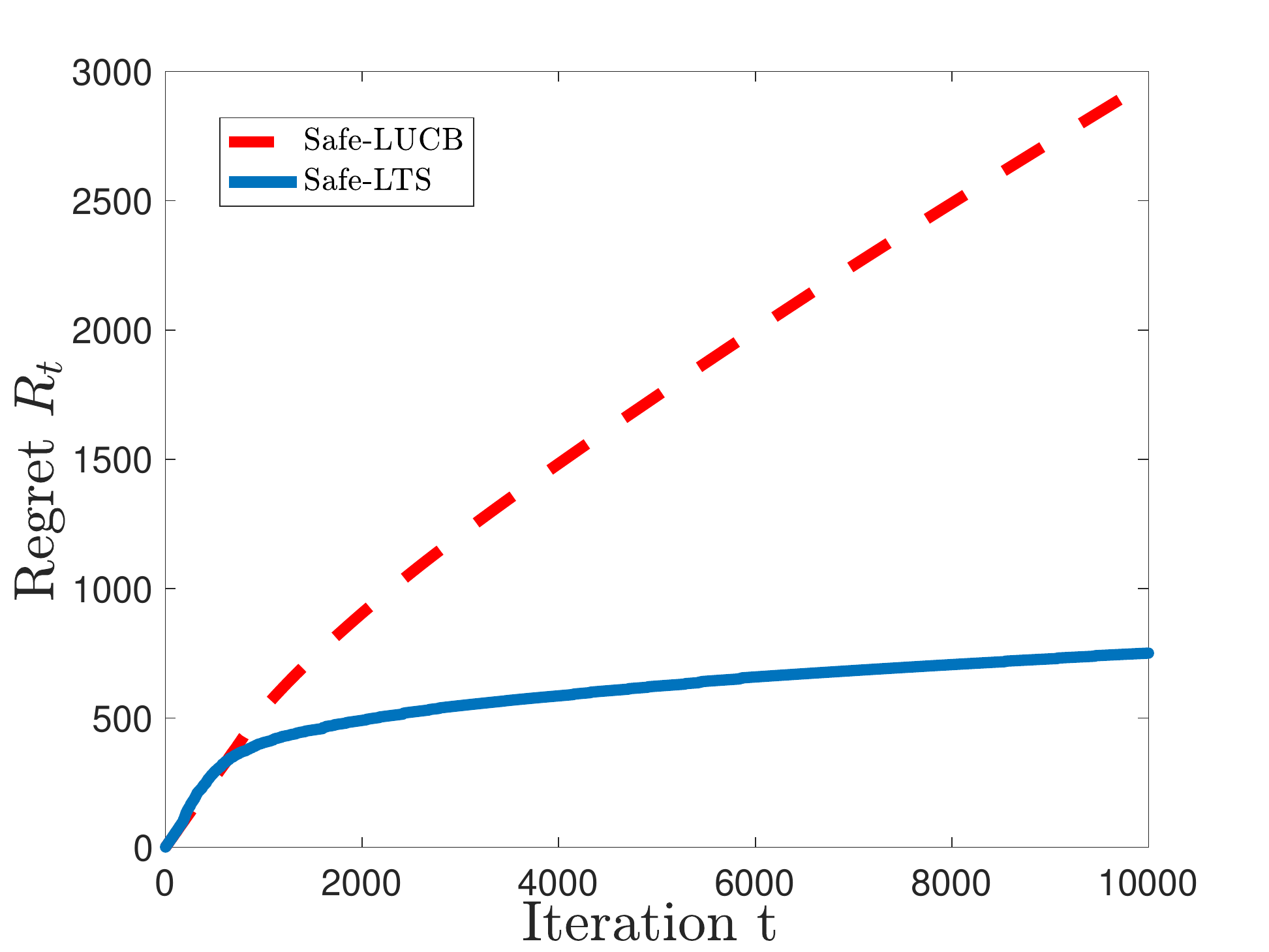}
        \caption{ Comparison of  regret of Safe-LUCB and Safe-LTS, for a single problem instance in which the safety constraint is active.  }
  \label{fig:safesets}
 \end{figure}

%Our numerical experiments in Sections \ref{comparison with safe-ucb} and \ref{camparing with safe-Lucb} suggest that the inherently randomized nature of LTS   is crucial for properly expanding the safe action set. This is exactly where the LUCB decision rule seems to fail in our numerical experiments. Note that the guarantees of \cite{amani2019linear} were obtained only after introducing a randomization phase to LUCB. 

 \begin{figure}
     \centering
          \includegraphics[width=0.65\linewidth]{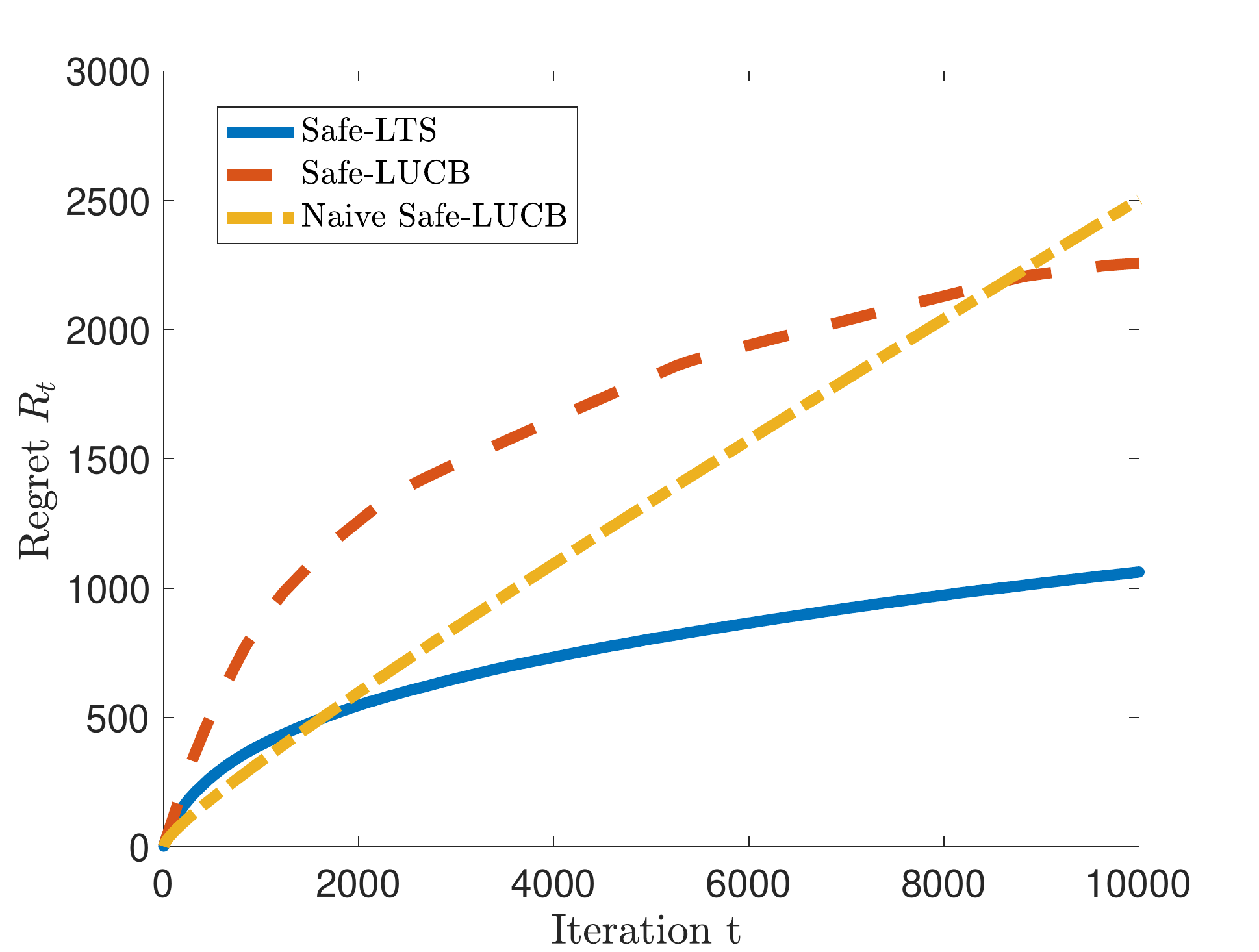}
         \caption{Comparison of the average cumulative regret of Safe-LTS versus two safe LUCB algorithms.}
         \label{fig:regretcomparison}
   \end{figure}
   
\subsection{Sampling from a dynamic noise distribution}\label{dyn.noise}
   
 In order for Safe-LTS to be frequently optimistic, our theory requires that the random perturbation $\eta_t$ satisfies \eqref{anti.concen} {\it for all rounds}. Specifically, we need the extra $\frac{2}{C}LS$ factor compared to \cite{abeille2017linear} in order to ensure safe set expansion. While this result is already sufficient for the tight regret guarantees of Theorem \ref{Main.theorem}, it does not fully capture our intuition (see also Sec.~\ref{challenges of safety}) that as the algorithm progresses and $\mathcal{D}_t^s$ gets closer to $\mathcal{D}_0^s$, exploration (and thus, the requirement on anti-concentration) does not need to be so aggressive. Based on this intuition, we propose the following heuristic modification, in which Safe-LTS uses a perturbation with the following {\it dynamic} property:
% \begin{equation}
        $\mathbb{P}_{\eta \sim \mathcal{H}^{\text{TS}}} \left( u^{\top} \eta \geq k(t)  \right) \geq p,$ %\label{dynamic.anti.concen}
    %  \end{equation} 
for $k(t)$ a linearly-decreasing function $k(t) = (1 + \frac{2}{C}LS)(1-t/T)$. Fig. \ref{regret} shows empirical evidence of the superiority of the heuristic.
 
%  of $\eta_t$ become the same as if no safety constraints were present.
% 
%  only in the initial rounds of the algorithm. As the algorithm progresses and we collect more actions that result in a better estimate of the safe set, explorations do not need to be so aggressive. As such, here we highlight the performance of a heuristic modification of the algorithm  in which the TS distribution $\mathcal{H}^{\text{TS}}$ does not  sample according to the above anti-concentration property for all rounds. We empirically observe that if the TS distribution satisfies \eqref{anti.concen}, the TS algorithm explores more than what it needs to get a good approximation of the unknown parameter, which can cause the growth of the regret. Instead, Fig. \ref{regret} shows that if the TS distribution satisfies the following {\it dynamic} property
%% \begin{equation}
%        $\mathbb{P}_{\eta \sim \mathcal{H}^{\text{TS}}} \left( u^{\top} \eta \geq k(t)  \right) \geq p,$ %\label{dynamic.anti.concen}
%    %  \end{equation} 
%where $k(t)$ is a linearly-decreasing function in time $k(t) = \left(-\frac{1 + \frac{2}{C}LS}{T}\right) t + (1 + \frac{2}{C}LS)$, the TS algorithm will have a smaller regret in comparison to when $k(t) = 1 + \frac{2}{C}LS$.
%   
        
\section{Conclusion}\label{sec:conclusions}
We studied LB in which the environment is subject to unknown linear safety constraints that need to be satisfied at each round. For this problem, we proposed Safe-LTS, which to the best of our knowledge, is the first safe TS algorithm with provable regret guarantees. Importantly, we show that Safe-LTS achieves regret of the same order as in the original setting with no safety constraints. We have also compared Safe-LTS with several UCB-type safe algorithms showing that the former has: better regret in the worst-case, fewer parameters to tune and  often superior empirical performance. Interesting directions for future work include: theoretically studying the dynamic property of Section \ref{dyn.noise}, as well as, investigating TS-based alternatives to the GP-UCB-type algorithms of \cite{Krause,sui2018stagewise}.
% gaining a theoretical understanding of the regret of the algorithm when the TS distribution satisfies the dynamic property of Section \ref{dyn.noise}, which empirically leads to regret of smaller order.

\newpage
\bibliography{example_paper}
\bibliographystyle{icml2020}

%%%%%%%%%%%%%%%%%%%%%%%%%%%%%%%%%%%%%%%%%%%%%%%%%%%%%%%%%%%%%%%%%%%%%%%%%%%%%%%
%%%%%%%%%%%%%%%%%%%%%%%%%%%%%%%%%%%%%%%%%%%%%%%%%%%%%%%%%%%%%%%%%%%%%%%%%%%%%%%
% DELETE THIS PART. DO NOT PLACE CONTENT AFTER THE REFERENCES!
%%%%%%%%%%%%%%%%%%%%%%%%%%%%%%%%%%%%%%%%%%%%%%%%%%%%%%%%%%%%%%%%%%%%%%%%%%%%%%%
%%%%%%%%%%%%%%%%%%%%%%%%%%%%%%%%%%%%%%%%%%%%%%%%%%%%%%%%%%%%%%%%%%%%%%%%%%%%%%%
\newpage
\appendix
\onecolumn

%%%%%%%%%%%%%%%%%%%%%%%%%%%%%%%%%%%%%%%%%%%%%%%%%%%%%%%%%%%%%%%%%%%%%%%%%%%%%%%
%%%%%%%%%%%%%%%%%%%%%%%%%%%%%%%%%%%%%%%%%%%%%%%%%%%%%%%%%%%%%%%%%%%%%%%%%%%%%%%

\section{Proof sketch: Why frequent optimism is enough to bound Term I}\label{sec:sketch_app}
As discussed in Section \ref{regret analysis}, the presence of the safety constraints complicates the requirement for optimism.  We show in Section \ref{sketch.of.the.proof} that Safe-LTS is optimistic with constant probability in spite of safety constraints. Based on this, we complete the sketch of the proof here by showing that we can bound the overall regret of Term I in \eqref{overall.regret} with the $V_{\tau}$-norm of optimistic (and in our case, safe) actions.   Let us first define the set of the optimistic parameters as
\begin{align}
    \Theta^{\rm{opt}}_t(\delta') = \{ \theta \in \mathbb{R}^d: \text{max}_{x \in \mathcal{D}_t^s} x^{\top} \theta \geq x_{\star}^{\top} \theta_{\star} \}. \label{optimisitc}
\end{align}
In Section \ref{sketch.of.the.proof}, we show that Safe-LTS samples from this set i.e., $\Tilde{\theta}_t \in \Theta_t^{\rm{opt}}$, with constant probability. Note that, if at round $t$ Safe-LTS samples from the set of  optimistic parameters, Term I at that round is non-positive. 
In the following, we show that  selecting the optimal arm corresponding to any optimistic parameter can control the overall regret of Term I. The argument below is adapted from \cite{abeille2017linear} with minimal changes; it is presented here for completeness.
 
 For the purpose of this proof sketch, we assume that at each round $t$, the safe decision set contains the previous safe action that the algorithm played, i.e., $ x_{t-1} \in \mathcal{D}_t^s$. However, for the formal proof in Appendix \ref{appndix.bounding.term I}, we do not need such an assumption.  
 Let $\tau$ be a time such that   $\Tilde{\theta}_{\tau} \in \Theta^{\rm{opt}}_t$, i.e., $ x_{\tau}^{\top} \Tilde{\theta}_{\tau} \geq x_{\star}^{\top} \theta_{\star}$. Then, for any $t \geq \tau$ we have \begin{align}
    \text{Term I} := R_t^{\text{TS}} =  x_{\star}^{\top} \theta_{\star} - & x_t^{\top} \Tilde{\theta}_t  \leq x_{\tau}^{\top} \Tilde{\theta}_{\tau} - x_t^{\top} \Tilde{\theta}_t \nonumber \\&\leq x_{\tau}^{\top} \left( \Tilde{\theta}_{\tau} - \Tilde{\theta}_t \right). \label{last.inequality} 
\end{align} The last inequality comes from the assumption that at each round $t$, the  safe decision set contains the previous played safe actions for rounds $s \leq t$; hence,  $x_{\tau}^{\top} \Tilde{\theta}_t \leq x_t^{\top} \Tilde{\theta}_t $. To continue from \eqref{last.inequality}, we use Cauchy-Schwarz, and obtain \begin{align}
     R_t^{\text{TS}} & \leq \norm{x_{\tau}}_{V_{\tau}^{-1}} \norm{\Tilde{\theta}_{\tau} - \Tilde{\theta}_t}_{V_{\tau}} \nonumber \\& \leq \left( \norm{\Tilde{\theta}_{\tau} - {\theta}_{\star}}_{V_{\tau}} + \norm{{\theta}_{\star} - \Tilde{\theta}_{t}  }_{V_{\tau}} \right) \norm{x_{\tau}}_{V_{\tau}^{-1}}. \nonumber \\& \leq \left( \norm{\Tilde{\theta}_{\tau} - {\theta}_{\star}}_{V_{\tau}} + \norm{{\theta}_{\star} - \Tilde{\theta}_{t}  }_{V_{t}} \right) \norm{x_{\tau}}_{V_{\tau}^{-1}} \label{cuachy-shc-bound}
\end{align} 
The last inequality comes from the fact that the Gram matrices construct a non-decreasing sequence ($V_{\tau}\preceq V_t $, $ \forall t \geq \tau$).  
Then, we define the ellipsoid $\mathcal{E}_t^{\text{TS}}(\delta')$ such that \begin{align}
    \mathcal{E}_t^{\text{TS}} (\delta') := \{ \theta \in \mathbb{R}^d : \norm{\theta - \hat{\theta}_t}_{V_t} \leq \gamma_t(\delta')    \} \label{e.TS},
\end{align} where \begin{align}
\gamma_t(\delta') = \beta_t(\delta') \big(  1 + \frac{2}{C}LS \big) \sqrt{c d \log{(\frac{c' d}{\delta})}}. \label{def.gamma}
\end{align}
 It is not hard to see by combining Theorem \ref{abbasi.2} and the concentration property that $\Tilde{\theta}_t \in \mathcal{E}_t^{\text{TS}}(\delta')$ with high probability. Hence, we can bound \eqref{cuachy-shc-bound} using triangular inequality such that: 
 \begin{align}
     R_t^{\text{TS}} &\leq \bigg( \gamma_{\tau}(\delta') + \beta_{\tau}(\delta') + \gamma_{t}(\delta') + \beta_{t}(\delta') \bigg) \norm{x_{\tau}}_{V_{\tau}^{-1}} \\& \leq  2 \bigg( \gamma_{T}(\delta') + \beta_{T}(\delta') \bigg) \norm{x_{\tau}}_{V_{\tau}^{-1}}
 \end{align}
The last inequality  comes from the fact that $\beta_t(\delta')$ and $\gamma_t(\delta')$ are non-decreasing in $t$ by construction.  Therefore, following the intuition of \cite{abeille2017linear}, we can upper bound  Term I with respect to the $V_{\tau}$-norm of the optimal safe action at time $\tau$ (see Section \ref{appndix.bounding.term I} in Appendix for formal proof). Bounding the term $\norm{x_{\tau}}_{V_{\tau}^{-1}}$ is standard based on the analysis provided in \cite{abbasi2011improved} (see Proposition \ref{proposition.bounding random actions} in the Appendix).

\section{Useful Results}\label{useful results}
The following result is standard and plays an important role in most  proofs for linear bandits problems.
\begin{Proposition}(\cite{abbasi2011improved})\label{proposition.bounding random actions}
Let $\lambda \geq 1$. For any arbitrary sequence of actions $(x_1,\dots,x_t) \in \mathcal{D}^t$, let $V_t$ be the corresponding Gram matrix \eqref{ Gram.matrix}, then \begin{align}
    \sum_{s=1}^{t} \norm{x_s}_{V_s^{-1}}^{2} \leq 2 \log{  \frac{\text{det}(V_{t+1})}{\text{det}(\lambda I)}} \leq 2 d \log{(1 + \frac{t L^2}{\lambda})}. \label{bound.random.actions}
\end{align}
In particular, we have \begin{align}
    \sum_{s=1}^{T} \norm{x_s}_{V_s^{-1}} \leq  \sqrt{T}\left(\sum_{s=1}^{T} \norm{x_s}_{V_s^{-1}}^{2}\right)^{\frac{1}{2}}  \leq  \sqrt{2 T d \log{ \left(1 + \frac{T L^2}{\lambda}\right)}}. \label{colorray.bound.over.arbitrary.sequance}
\end{align}
\end{Proposition}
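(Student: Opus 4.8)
The plan is to prove this elliptical potential inequality by exploiting the multiplicative recursion satisfied by the determinants of the Gram matrices, and then converting the resulting logarithmic sum into the sum of squared weighted norms via an elementary scalar inequality. The starting observation is that the definition $V_t = \lambda I + \sum_{s=1}^{t-1} x_s x_s^{\top}$ gives the rank-one update $V_{s+1} = V_s + x_s x_s^{\top}$ with base case $V_1 = \lambda I$.

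First I would establish the key determinant identity. Factoring out $V_s^{1/2}$ on both sides and using $\text{det}(I + vv^{\top}) = 1 + \norm{v}_2^2$ with $v = V_s^{-1/2}x_s$ gives
$$\text{det}(V_{s+1}) = \text{det}(V_s + x_s x_s^{\top}) = \text{det}(V_s)\big(1 + \norm{x_s}_{V_s^{-1}}^2\big).$$
Telescoping this relation from $s=1$ to $s=t$ and using $V_1 = \lambda I$ yields $\frac{\text{det}(V_{t+1})}{\text{det}(\lambda I)} = \prod_{s=1}^{t}(1 + \norm{x_s}_{V_s^{-1}}^2)$, so taking logarithms turns the product into the sum $\log\frac{\text{det}(V_{t+1})}{\text{det}(\lambda I)} = \sum_{s=1}^{t}\log(1 + \norm{x_s}_{V_s^{-1}}^2)$.

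Next I would invoke the elementary inequality $z \leq 2\log(1+z)$, valid for $z \in [0,1]$. Since $V_s \succeq \lambda I$, I can bound $\norm{x_s}_{V_s^{-1}}^2 \leq \norm{x_s}_2^2/\lambda \leq L^2/\lambda$, so the argument $z = \norm{x_s}_{V_s^{-1}}^2$ stays in the regime where the inequality applies termwise (if it could exceed $1$, one replaces it by $\min(1,z)\le 2\log(1+z)$, which holds for all $z\ge 0$). Summing over $s$ gives the first claimed bound $\sum_{s=1}^{t}\norm{x_s}_{V_s^{-1}}^2 \leq 2\log\frac{\text{det}(V_{t+1})}{\text{det}(\lambda I)}$. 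For the second inequality I would control the determinant by the trace through AM--GM on the eigenvalues of $V_{t+1}$, namely $\text{det}(V_{t+1}) \leq (\text{tr}(V_{t+1})/d)^d$; since $\text{tr}(V_{t+1}) = \lambda d + \sum_{s=1}^{t}\norm{x_s}_2^2 \leq \lambda d + tL^2$ and $\text{det}(\lambda I) = \lambda^d$, this yields $\frac{\text{det}(V_{t+1})}{\text{det}(\lambda I)} \leq (1 + \tfrac{tL^2}{d\lambda})^d$, hence $2\log\frac{\text{det}(V_{t+1})}{\text{det}(\lambda I)} \leq 2d\log(1 + \tfrac{tL^2}{\lambda})$. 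The ``in particular'' statement then follows at once from Cauchy--Schwarz, $\sum_{s=1}^{T}\norm{x_s}_{V_s^{-1}} \leq \sqrt{T}\,(\sum_{s=1}^{T}\norm{x_s}_{V_s^{-1}}^2)^{1/2}$, combined with the bound on the sum of squares just derived.

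The only subtle point, and essentially the one step that must be handled with care, is the termwise scalar inequality: one must ensure the argument $\norm{x_s}_{V_s^{-1}}^2$ lies in the range where $z \leq 2\log(1+z)$ is valid, which is precisely where the normalization $\lambda \geq 1$ together with $\norm{x}_2 \leq L$ enters (or, more robustly, the $\min(1,z)$ version of the inequality is used). Every other step is routine linear algebra (the determinant recursion and AM--GM) plus a single application of Cauchy--Schwarz, so no further obstacle is anticipated.
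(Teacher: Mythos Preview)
Your proof is correct and follows the standard argument from \cite{abbasi2011improved}. The paper itself does not supply a proof of this proposition; it is stated as a cited ``useful result'' without derivation, so there is nothing further to compare against.
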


Also, we recall the Azuma's concentration inequality for super-martingales.
\begin{Proposition}(Azuma's inequality \cite{boucheron2013concentration})
If a super-martingale $(Y_t)_{t \geq 0}$ corresponding to a filtration $\mathcal{F}_t$ satisfies $|Y_t - Y_{t-1} | < c_t$ for some positive constant $c_t$, for all $t=1,\dots,T$, then, for any $u > 0$,
\begin{align}
    \mathbb{P} \left(Y_T - Y_0 \geq u \right) \leq 2 e^{-\frac{u^2}{2 \sum_{t=1}^T c_t^2}}.
\end{align}
\end{Proposition}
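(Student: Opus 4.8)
The plan is to prove Azuma's inequality by the standard exponential (Chernoff) method: bound the upper-tail probability by an exponential moment, control that moment through the bounded differences, and optimize a free parameter. I first note that the stated bound carries a harmless factor of $2$; I will in fact establish the sharper one-sided estimate $\mathbb{P}(Y_T - Y_0 \geq u) \leq \exp(-u^2/(2\sum_{t=1}^T c_t^2))$, from which the claimed inequality follows since $1 \leq 2$. Writing $Y_T - Y_0 = \sum_{t=1}^T D_t$ with $D_t := Y_t - Y_{t-1}$, I record the two properties I will use: the super-martingale hypothesis gives $\mathbb{E}[D_t \mid \mathcal{F}_{t-1}] \leq 0$, while $|Y_t - Y_{t-1}| < c_t$ gives $D_t \in (-c_t, c_t)$ almost surely. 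For any $s > 0$, applying Markov's inequality to the increasing map $y \mapsto e^{sy}$ yields
\begin{align}
\mathbb{P}(Y_T - Y_0 \geq u) \leq e^{-su}\, \mathbb{E}\big[e^{s \sum_{t=1}^T D_t}\big].
\end{align}

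The key step is a conditional moment-generating-function (Hoeffding-type) estimate: for each $t$ and every $s>0$,
\begin{align}
\mathbb{E}\big[e^{sD_t} \mid \mathcal{F}_{t-1}\big] \leq e^{s^2 c_t^2 / 2}.
\end{align}
I would derive this by convexity: on $[-c_t, c_t]$ the map $x \mapsto e^{sx}$ lies below the chord through its endpoint values, so $e^{sx} \leq \frac{c_t - x}{2c_t} e^{-sc_t} + \frac{c_t + x}{2c_t} e^{sc_t}$. Taking $\mathbb{E}[\,\cdot \mid \mathcal{F}_{t-1}]$ and writing $m := \mathbb{E}[D_t \mid \mathcal{F}_{t-1}]$ shows the conditional MGF is bounded by an envelope that is strictly increasing in $m$ (its $m$-derivative equals $(e^{sc_t}-e^{-sc_t})/(2c_t) > 0$); since $m \leq 0$ by the super-martingale property, the envelope is maximized over $m \leq 0$ at $m=0$, where it equals $\cosh(sc_t)$. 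Finally $\cosh(sc_t) \leq e^{s^2 c_t^2/2}$ follows from a termwise comparison of Taylor series using $(2k)! \geq 2^k k!$.

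With the per-step bound in hand, I would peel off the terms one at a time using the tower property: conditioning on $\mathcal{F}_{T-1}$ and pulling out the $\mathcal{F}_{T-1}$-measurable factor,
\begin{align}
\mathbb{E}\big[e^{s \sum_{t=1}^T D_t}\big] = \mathbb{E}\Big[e^{s \sum_{t=1}^{T-1} D_t}\, \mathbb{E}[e^{sD_T} \mid \mathcal{F}_{T-1}]\Big] \leq e^{s^2 c_T^2/2}\, \mathbb{E}\big[e^{s \sum_{t=1}^{T-1} D_t}\big],
\end{align}
and iterating down to $t=1$ gives $\mathbb{E}[e^{s(Y_T - Y_0)}] \leq \exp(s^2 \sum_{t=1}^T c_t^2 / 2)$. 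Substituting back into the Chernoff bound leaves $\mathbb{P}(Y_T - Y_0 \geq u) \leq \exp(-su + s^2 \sum_{t=1}^T c_t^2/2)$, and minimizing the exponent over $s>0$ at $s = u/\sum_{t=1}^T c_t^2$ produces the claimed rate.

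I expect the main obstacle to be the conditional Hoeffding estimate, specifically the need to accommodate the super-martingale inequality $\mathbb{E}[D_t\mid\mathcal{F}_{t-1}]\leq 0$ rather than an exactly centered conditional mean. The convexity/chord argument handles this cleanly because the resulting envelope is monotone in the conditional mean $m$, so the worst case over $m \leq 0$ occurs at $m=0$ and reduces to the familiar centered computation. Everything else — the Chernoff step, the tower-property telescoping, and the final optimization in $s$ — is routine.
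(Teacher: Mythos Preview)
Your proof is correct and follows the standard Chernoff-method derivation of Azuma's inequality: the Markov/exponential bound, the conditional Hoeffding lemma via the chord-of-convexity argument (with the nice observation that monotonicity in the conditional mean $m$ handles the super-martingale case by reducing to $m=0$), the tower-property telescoping, and the optimization in $s$ are all carried out cleanly. You also correctly note that the factor $2$ in the stated bound is superfluous for the one-sided tail.

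However, there is nothing to compare against: the paper does not prove this proposition. It is stated as a known result with a citation to \cite{boucheron2013concentration} and used as a black box later in the regret analysis (to control the martingale $Y_T$ arising from the difference between $\mathbb{E}[\|x_t(\Tilde\theta_t)\|_{V_t^{-1}}\mid\mathcal{F}_t]$ and $\|x_t\|_{V_t^{-1}}$). Your write-up therefore supplies a self-contained proof where the paper simply invokes the literature.
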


\section{Confidence Regions}
\label{least-square confidence}
We start by constructing the following confidence regions for the RLS-estimates. 
\begin{definition}\label{def.events}
Let $\delta \in (0,1)$, $\delta' = \frac{\delta}{6T}$, and $t \in [T]$. We define the following events:
\begin{itemize}
    \item $\hat{E}_t$ is the event that the RLS-estimate $\hat{\theta}$ concentrates around $\theta_{\star}$ for all steps $s \leq t$, i.e., $\hat{E}_t = \{ \forall s \leq t, \norm{\hat{\theta}_s - \theta_{\star}}_{V_s} \leq \beta_s(\delta')  \}$;
    \item  $\hat{Z}_t$ is the event that the RLS-estimate $\hat{\mu}$ concentrates around $\mu_{\star}$, i.e., $\hat{Z}_t = \{ \forall s \leq t, \norm{\hat{\mu}_s - \mu_{\star}}_{V_s} \leq \beta_s(\delta')  \}$.
     Moreover, define  $Z_t$ such that $Z_t = \hat{E}_t \cap \hat{Z}_t$. 
     \item $\Tilde{E}_t$ is the event that the sampled parameter $\Tilde{\theta}_t$ concentrates around $\hat{\theta}_t$ for all steps $s \leq t$, i.e., $\Tilde{E}_t = \{ \forall s \leq t, \norm{\Tilde{\theta}_s - \hat{\theta}_s}_{V_s} \leq \gamma_s(\delta')  \}$. 
     Let $E_t$ be such that $E_t = \Tilde{E}_t \cap Z_t$.
      \end{itemize} 

\end{definition}

\begin{lemma}\label{first.event.bound}
    Under Assumptions \ref{assmp.1}, \ref{assmp.2}, we have $\mathbb{P}(Z) = \mathbb{P}(\hat{E} \cap \hat{Z}) \geq  1 - \frac{\delta}{3}$ where $\hat{E} = \hat{E}_T \subset \dots \subset \hat{E}_1$, and $\hat{Z} = \hat{Z}_T \subset \dots \subset \hat{Z}_1$.
\end{lemma}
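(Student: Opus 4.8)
The plan is to apply Theorem \ref{abbasi.2} twice, once for each of the two RLS estimates, and combine the two failure probabilities via a union bound. First I would recall that $\hat E = \hat E_T$ is, by Definition \ref{def.events}, exactly the event that $\norm{\hat\theta_s - \theta_\star}_{V_s} \le \beta_s(\delta')$ holds simultaneously for \emph{all} $s \le T$; since $\hat E_1 \supseteq \hat E_2 \supseteq \dots \supseteq \hat E_T$, intersecting over all $s$ is the same as the event $\hat E_T$ itself, which justifies the nesting $\hat E = \hat E_T \subset \dots \subset \hat E_1$ asserted in the statement. The key point is that Theorem \ref{abbasi.2} is an ``anytime'' (uniform-in-$t$) statement: with probability at least $1-\delta'$, $\theta_\star \in \mathcal{E}_t(\delta')$ for \emph{all} $t \ge 1$, i.e., $\norm{\hat\theta_t - \theta_\star}_{V_t} \le \beta_t(\delta')$ for all $t$. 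Hence $\mathbb{P}(\hat E) \ge 1 - \delta'$. Note that the reward noise $\xi_t$ is $R$-sub-Gaussian with respect to the filtration $\mathcal{F}_{t-1}$ by Assumption \ref{assmp.1}, and $\norm{\theta_\star}_2 \le S$ by Assumption \ref{assmp.2}, so the hypotheses of Theorem \ref{abbasi.2} are met.

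Next I would do the same for $\hat\mu$: the side-information noise $\zeta_t$ is also conditionally zero-mean and $R$-sub-Gaussian (Assumption \ref{assmp.1}), $\norm{\mu_\star}_2 \le S$ (Assumption \ref{assmp.2}), and the measurements $w_s = x_s^\top \mu_\star + \zeta_s$ feed into the same Gram matrix $V_t$, so Theorem \ref{abbasi.2} applies verbatim to give $\mathbb{P}(\hat Z) \ge 1 - \delta'$, again with the nesting $\hat Z = \hat Z_T \subset \dots \subset \hat Z_1$ since the events are decreasing in the index. A union bound then yields
\begin{align}
\mathbb{P}(Z) = \mathbb{P}(\hat E \cap \hat Z) \ge 1 - \mathbb{P}(\hat E^c) - \mathbb{P}(\hat Z^c) \ge 1 - 2\delta'. \nonumber
\end{align}
Finally, substituting $\delta' = \frac{\delta}{6T}$ gives $1 - 2\delta' = 1 - \frac{\delta}{3T} \ge 1 - \frac{\delta}{3}$ (using $T \ge 1$), which is the claimed bound.

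I do not expect any serious obstacle here: the lemma is essentially bookkeeping on top of the already-cited Theorem \ref{abbasi.2}, and the only mild subtlety is making sure the anytime/uniform-in-$t$ nature of that theorem is invoked so that a \emph{single} application controls all $s \le T$ at once (rather than union-bounding over $t$, which would cost an extra factor of $T$ and would not match the stated constant). A secondary point worth a sentence is that although the two estimates $\hat\theta$ and $\hat\mu$ share the design $x_1,\dots,x_t$ and hence the matrix $V_t$, the construction is adaptive and the self-normalized martingale bound underlying Theorem \ref{abbasi.2} holds for each noise sequence separately; no independence between $\xi$ and $\zeta$ is needed because we only union-bound the two failure events rather than multiplying probabilities.
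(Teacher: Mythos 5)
Your proposal is correct and follows exactly the argument the paper intends: the paper omits the proof, deferring to Lemma 1 of \cite{abeille2017linear}, which is precisely the standard route you take — invoke the anytime self-normalized bound of Theorem \ref{abbasi.2} once for $\hat\theta$ and once for $\hat\mu$, union-bound the two failure events, and check that $1-2\delta'\ge 1-\delta/3$. Your remarks on the uniform-in-$t$ nature of the bound (avoiding an extra factor of $T$) and on not needing independence between $\xi$ and $\zeta$ are exactly the right points of care.
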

\begin{proof}
The proof is similar to the one in Lemma 1 of \cite{abeille2017linear} and is ommited for brevity.
\end{proof}

\begin{lemma}\label{second.event.bound}
    Under Assumptions \ref{assmp.1}, \ref{assmp.2}, we have $ \mathbb{P}(E) = \mathbb{P}( \Tilde{E} \cap Z) \geq 1 - \frac{\delta}{2}$, where  $\Tilde{E} = \Tilde{E}_T \subset \dots \subset \Tilde{E}_1$. 
\end{lemma}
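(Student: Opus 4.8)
The plan is to reduce the claim to a single union bound, combining the already-established bound $\mathbb{P}(Z)\ge 1-\tfrac{\delta}{3}$ from Lemma~\ref{first.event.bound} with a fresh tail bound on the event $\tilde E=\tilde E_T$. Since $E=\tilde E\cap Z$, we have $\mathbb{P}(E)\ge 1-\mathbb{P}(\tilde E^{c})-\mathbb{P}(Z^{c})\ge 1-\mathbb{P}(\tilde E^{c})-\tfrac{\delta}{3}$, so it suffices to prove $\mathbb{P}(\tilde E)\ge 1-\tfrac{\delta}{6}$.

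Next I would unpack $\tilde E_T=\{\forall s\le T,\ \norm{\Tilde{\theta}_s-\hat\theta_s}_{V_s}\le \gamma_s(\delta')\}$ using the sampling rule \eqref{theta.tilde}. Writing $\Tilde{\theta}_s-\hat\theta_s=\beta_s(\delta')\,V_s^{-1/2}\eta_s$ and using that $V_s^{-1/2}$ is symmetric with $(V_s^{-1/2})^{\top}V_s\,V_s^{-1/2}=I$, one gets $\norm{\Tilde{\theta}_s-\hat\theta_s}_{V_s}=\beta_s(\delta')\norm{\eta_s}_2$. Hence the per-round event is exactly $\{\norm{\eta_s}_2\le (1+\tfrac{2}{C}LS)\sqrt{c\,d\log(c'd/\delta')}\}$, which is precisely the content of the concentration property \eqref{concen.} of $\mathcal{H}^{\text{TS}}$ with the free confidence parameter instantiated at $\delta'$ (so that the threshold equals $\gamma_s(\delta')/\beta_s(\delta')$). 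Because $\eta_s$ is drawn IID from $\mathcal{H}^{\text{TS}}$ independently of $\mathcal{F}_{s-1}$, while $V_s$ and $\beta_s(\delta')$ are $\mathcal{F}_{s-1}$-measurable, one conditions on $\mathcal{F}_{s-1}$, applies \eqref{concen.} to the conditional law of $\eta_s$ (which is still $\mathcal{H}^{\text{TS}}$), and integrates, to conclude $\mathbb{P}\big(\norm{\Tilde{\theta}_s-\hat\theta_s}_{V_s}\le \gamma_s(\delta')\big)\ge 1-\delta'$ for every $s\in[T]$.

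A union bound over $s=1,\dots,T$ — using that $\tilde E_T=\bigcap_{s\le T}\{\norm{\Tilde{\theta}_s-\hat\theta_s}_{V_s}\le \gamma_s(\delta')\}$, and that the nesting $\tilde E_T\subset\cdots\subset\tilde E_1$ is immediate from the definition — then yields $\mathbb{P}(\tilde E)\ge 1-T\delta'=1-\tfrac{\delta}{6}$, since $\delta'=\tfrac{\delta}{6T}$. Combining with the reduction above gives $\mathbb{P}(E)=\mathbb{P}(\tilde E\cap Z)\ge 1-\tfrac{\delta}{6}-\tfrac{\delta}{3}=1-\tfrac{\delta}{2}$, as claimed.

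The argument is essentially bookkeeping, so there is no serious obstacle; the one place that needs care is the measurability/independence point in the middle step — one must verify that the \emph{unconditional} tail bound of Definition~\ref{definition.noise} can still be invoked even though the threshold $\gamma_s(\delta')$ is a random, history-dependent quantity, which is handled by conditioning on $\mathcal{F}_{s-1}$ and noting that the conditional distribution of $\eta_s$ remains $\mathcal{H}^{\text{TS}}$. The only other delicate point is the constant accounting: the budget $\tfrac{\delta}{6}$ allotted to $\tilde E$ plus $\tfrac{\delta}{3}$ inherited for $Z$ must sum to $\tfrac{\delta}{2}$, which is exactly what forces the choice $\delta'=\tfrac{\delta}{6T}$.
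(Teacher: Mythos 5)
Your proposal is correct and follows essentially the same route as the paper: reduce to $\mathbb{P}(\Tilde{E})\ge 1-\tfrac{\delta}{6}$, observe that $\norm{\Tilde{\theta}_s-\hat{\theta}_s}_{V_s}=\beta_s(\delta')\norm{\eta_s}_2$ so the per-round event is exactly the concentration property \eqref{concen.} at level $\delta'$, union-bound over $T$ rounds, and combine with Lemma \ref{first.event.bound}. Your extra care about conditioning on $\mathcal{F}_{s-1}$ (since $V_s$ and $\beta_s(\delta')$ are history-dependent while $\eta_s$ is freshly sampled) is a point the paper's proof passes over silently, and it is handled correctly.
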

\begin{proof}
We  show that $\mathbb{P}(\Tilde{E}) \geq 1 - \frac{\delta}{6}$. Then, from Lemma \ref{first.event.bound} we know that  $\mathbb{P}(Z) \geq 1 - \frac{\delta}{3}$, thus we can conclude that $\mathbb{P}(E) \geq 1 - \frac{\delta}{2}$. Bounding $\Tilde{E}$ comes directly from concentration inequality \eqref{concen.}. Specifically, 
\begin{align}
     1 \leq   t \leq T, \quad &  \mathbb{P}\left( \norm{\Tilde{\theta}_t - \hat{\theta}_t}_{V_t}  \leq \gamma_t (\delta') \right)  = \mathbb{P}\left( \norm{\eta_t}_2 \leq \frac{\gamma_t(\delta')}{\beta_t(\delta')}  \right) \nonumber\\& = \mathbb{P} \left(  \norm{ \eta_t}_2 \leq \left( 1 + \frac{2}{C}LS \right) \sqrt{c d \log{(\frac{c' d}{\delta'})}} \right) \geq 1 - \delta'. \nonumber
\end{align}
Applying union bound on this ensures that $\mathbb{P}(\Tilde{E}) \geq 1 - T \delta' = 1 - \frac{\delta}{6}.$ 
 \end{proof}

 \section{Formal proof of Lemma \ref{optimitic.lemma}}\label{appndx.proof.of.lema}
 In this section, we provide a  formal statement and a detailed proof of Lemma \ref{optimitic.lemma}. Here, we need several modifications compared to \cite{abeille2017linear} that are required because in our setting, actions $x_t$ belong to inner approximations of the true safe set $\mathcal{D}_0^s$. Moreover, we follow an algebraic treatment that is perhaps simpler compared to the geometric viewpoint in \cite{abeille2017linear}.
 
 \begin{lemma}\label{optimitic.lemma.formal}
Let $\Theta_t^{\rm{opt}} = \{\theta \in \mathbb{R}^d : \rm{max}_{x \in \mathcal{D}_t^s}  x^{\top}\theta \geq x_{\star}^{\top}\theta_{\star}   \} \cap \mathcal{E}_t^{\text{TS}}$ be the set of optimistic parameters, $\Tilde{\theta}_t = \hat{\theta}_t + \beta_t(\delta') V_t^{-\frac{1}{2}} \eta_t$ with $\eta_t \sim \mathcal{D}^{\text{TS}}$, then $\forall t \geq 1$, $\mathbb{P}\left(\Tilde{\theta}_t \in \Theta_t^{\rm{opt}} | \mathcal{F}_t, Z_t \right) \geq \frac{p}{2}$.
\end{lemma}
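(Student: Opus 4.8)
The plan is to check separately the two conditions encoded in $\Theta_t^{\text{opt}}=\{\theta:\max_{x\in\mathcal{D}_t^s}x^\top\theta\ge x_\star^\top\theta_\star\}\cap\mathcal{E}_t^{\text{TS}}$ and to combine them by a union bound. The easy half is membership in $\mathcal{E}_t^{\text{TS}}$: by \eqref{theta.tilde} one has $\|\Tilde\theta_t-\hat\theta_t\|_{V_t}=\beta_t(\delta')\|\eta_t\|_2$, so $\{\Tilde\theta_t\in\mathcal{E}_t^{\text{TS}}\}$ coincides with $\{\|\eta_t\|_2\le\gamma_t(\delta')/\beta_t(\delta')\}$, which by the concentration property \eqref{concen.} has probability at least $1-\delta'$; because $\eta_t$ is sampled afresh and independently of everything observed before round $t$, while the conditioning event $Z_t$ (see Definition \ref{def.events}) and the matrix $V_t$ depend only on that past, the conditional probability given $\mathcal{F}_t$ and $Z_t$ is unchanged. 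Thus it remains to prove that $\max_{x\in\mathcal{D}_t^s}x^\top\Tilde\theta_t\ge x_\star^\top\theta_\star$ with conditional probability at least $p$; the intersection then has conditional probability at least $p-\delta'\ge p/2$.

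For the optimism half I would use the surrogate-point argument of Section \ref{sketch.of.the.proof}. Put $\alpha_t=\big(1+\tfrac{2}{C}\beta_t(\delta')\|x_\star\|_{V_t^{-1}}\big)^{-1}\in(0,1]$ and $\Tilde x_t=\alpha_t x_\star$. Since $\mathcal{D}_0$ is convex and contains the origin, $\Tilde x_t\in\mathcal{D}_0$. On the event $Z_t$ we have $\|\hat\mu_t-\mu_\star\|_{V_t}\le\beta_t(\delta')$, hence $\mathcal{C}_t\subseteq\Tilde{\mathcal{C}}_t$ and therefore $\Tilde{\mathcal{D}}_t^s\subseteq\mathcal{D}_t^s$, with $\Tilde{\mathcal{C}}_t,\Tilde{\mathcal{D}}_t^s$ as in \eqref{quadratic.const.shrunk}. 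Using $x_\star^\top\mu_\star\le C$,
\[
\max_{v\in\Tilde{\mathcal{C}}_t}\Tilde x_t^\top v=\alpha_t\big(x_\star^\top\mu_\star+2\beta_t(\delta')\|x_\star\|_{V_t^{-1}}\big)\le\alpha_t\big(C+2\beta_t(\delta')\|x_\star\|_{V_t^{-1}}\big)=C,
\]
so $\Tilde x_t\in\Tilde{\mathcal{D}}_t^s\subseteq\mathcal{D}_t^s$, and optimality of $x_t$ over $\mathcal{D}_t^s$ gives $x_t^\top\Tilde\theta_t\ge\alpha_t x_\star^\top\Tilde\theta_t$; it therefore suffices to show $\alpha_t x_\star^\top\Tilde\theta_t\ge x_\star^\top\theta_\star$ with probability at least $p$.

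Substituting $\Tilde\theta_t=\hat\theta_t+\beta_t(\delta')V_t^{-1/2}\eta_t$ and using $1/\alpha_t-1=\tfrac{2}{C}\beta_t(\delta')\|x_\star\|_{V_t^{-1}}$, this last inequality is equivalent to
\[
\beta_t(\delta')\,x_\star^\top V_t^{-1/2}\eta_t\;\ge\;x_\star^\top(\theta_\star-\hat\theta_t)+\tfrac{2}{C}\,\beta_t(\delta')\|x_\star\|_{V_t^{-1}}\,x_\star^\top\theta_\star .
\]
On $Z_t$, Cauchy--Schwarz and Theorem \ref{abbasi.2} give $x_\star^\top(\theta_\star-\hat\theta_t)\le\beta_t(\delta')\|x_\star\|_{V_t^{-1}}$; moreover the origin is a safe action ($C>0$), so $0\le x_\star^\top\theta_\star\le\|x_\star\|_2\|\theta_\star\|_2\le LS$. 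Hence it is enough that $x_\star^\top V_t^{-1/2}\eta_t\ge\|x_\star\|_{V_t^{-1}}\big(1+\tfrac{2}{C}LS\big)$, i.e., after dividing by $\|x_\star\|_{V_t^{-1}}=\|V_t^{-1/2}x_\star\|_2>0$ and writing $u_t=V_t^{-1/2}x_\star/\|x_\star\|_{V_t^{-1}}$ (a unit vector depending only on the past), that $u_t^\top\eta_t\ge 1+\tfrac{2}{C}LS$. By the anti-concentration property \eqref{anti.concen} and the independence of $\eta_t$ from the past, this holds with probability at least $p$, which finishes the argument. I expect the crux to be exactly this last reduction: the value of $\alpha_t$ is forced by the requirement that $1/\alpha_t-1$ be proportional to $\|x_\star\|_{V_t^{-1}}$, which is precisely what lets that (uncontrollable) quantity cancel after the division, so that the event is expressed through a unit vector and no lower bound on $\lambda_{\min}(V_t)$ is needed; picking the surrogate direction $\alpha_t x_\star$ and tracking which inequalities are available only on $Z_t$ is the delicate part, whereas the probabilistic steps are routine.
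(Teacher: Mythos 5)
Your proposal is correct and follows essentially the same route as the paper's proof in Appendix \ref{appndx.proof.of.lema}: the enlarged ellipsoid $\Tilde{\mathcal{C}}_t$ centered at $\mu_\star$, the shrunk set $\Tilde{\mathcal{D}}_t^s$, the surrogate point $\alpha_t x_\star$ with $1/\alpha_t-1$ proportional to $\|x_\star\|_{V_t^{-1}}$ so that this quantity cancels, the reduction to the anti-concentration event $u_t^\top\eta_t\ge 1+\tfrac{2}{C}LS$, and the final $p-\delta'\ge p/2$ intersection with $\mathcal{E}_t^{\text{TS}}$. The only (cosmetic) difference is that the paper makes the condition $\delta'\le p/2$ explicit via the assumption $T\ge\tfrac{1}{3p}$, which you assert implicitly.
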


\begin{proof}

 First, we provide the shrunk version $\Tilde{\mathcal{D}}_t^s$ of ${\mathcal{D}}_t^s$ as follows:
 
 \textbf{A shrunk safe decision set $\Tilde{\mathcal{D}}_t^s$.} Consider the enlarged confidence region $\Tilde{\mathcal{C}}_t$ centered at $\mu_{\star}$ as \begin{align}
    \Tilde{\mathcal{C}}_t := \{ v \in \mathbb{R}^d : \norm{v - \mu_{\star}}_{V_t} \leq 2 \beta_t(\delta')  \}.
\end{align}
We know that $\mathcal{C}_t \subseteq \Tilde{\mathcal{C}}_t $, since $\forall v \in \mathcal{C}_t $, we know that $\norm{v - \mu_{\star}}_{V_t} \leq \norm{v - \hat{\mu}_t}_{V_t} + \norm{\hat{\mu}_t - \mu_{\star}}_{V_t} \leq 2 \beta(t)$. From the definition of enlarged confidence region, we can get the following definition for shrunk safe decision set:
\begin{align}
    \Tilde{\mathcal{D}}_t^s  := & \{ x \in \mathcal{D}_0 : x^{\top} v \leq C , \forall v \in \Tilde{\mathcal{C}}_t \} = \{x \in \mathcal{D}_0 : \max_{v \in \Tilde{\mathcal{C}}_t} x^{\top} v \leq C  \} \nonumber \\& = \{x \in \mathcal{D}_0 : x^{\top} \mu_{\star} + 2 \beta_t(\delta') \norm{x}_{V_t^{-1}} \leq C  \}, \label{shrunk.safe.set}
\end{align}
and note that $\Tilde{\mathcal{D}}_t^s \subseteq {\mathcal{D}}_t^s$, and they are not empty, since they include zero due to Assumption \ref{assmp.3}.

Then, we define the parameter $\alpha_t$ such that the vector $z_t = \alpha_t x_{\star}$ in direction $x_{\star}$ belongs to $\Tilde{\mathcal{D}}_t^s$ and is closest to $x_{\star}$. Hence, we have: \begin{align}
    \alpha_t := \text{max} \left\{ \alpha \in [0,1] : z_t = \alpha x_{\star} \in \Tilde{\mathcal{D}}_t^s \right\}.
\end{align}
Since $\mathcal{D}_0$ is convex by Assumption \ref{assmp.3} and both  $0,x_{\star} \in \mathcal{D}_0$, we have 
\begin{align}
    \alpha_t = \text{max} \bigg\{ \alpha \in [0,1] : \alpha  \bigg( x^{\top}_{\star} \mu_{\star} +  2  \beta_t & (\delta' )  \norm{x_{\star}}_{V_t^{-1}}  \bigg)  \leq C \bigg\}. \label{def.alpha}
\end{align}
From constraint \eqref{safty.const}, we know that $x^{\top}_{\star} \mu_{\star} \leq C$. We choose $\alpha_t$ such that \begin{align}
    1 + \frac{2}{C} \beta_t(\delta') \norm{x_{\star}}_{V_t^{-1}} = \frac{1}{\alpha_t}. \label{alpha_t}
\end{align}
We need to study the probability that a sampled $\Tilde{\theta}_t$ drawn from $\mathcal{H}^{\text{TS}}$ distribution at round $t$ is optimistic, i.e., 
\begin{align*}
    p_t = \mathbb{P}\left( (x_t(\Tilde{\theta}_t))^{\top} \Tilde{\theta}_t \geq x_{\star}^{\top}\theta_{\star} \;\middle|\; \mathcal{F}_t, Z_t \right).
\end{align*}

Using the definition of $\alpha_t$ in \eqref{def.alpha}, we have
\begin{align}
   (x_t(\Tilde{\theta}_t))^{\top} \Tilde{\theta}_t = \max_{x \in \mathcal{D}_t ^s}  x^{\top}  \Tilde{\theta}_t   \geq \alpha_t x^{\top}_{\star} \Tilde{\theta}_t. \label{projection.ineq}
\end{align}
Hence, we can write
\begin{align}
    p_t  \geq & \mathbb{P} \left( \alpha_t x^{\top}_{\star} \Tilde{\theta}_t \geq x^{\top}_{\star} \theta_{\star}  \;\middle|\; \mathcal{F}_t, Z_t \right) \nonumber \\& = \mathbb{P} \left(  x^{\top}_{\star} \left(\hat{\theta}_t + \beta_t(\delta') V_t^{-\frac{1}{2}} \eta_t  \right) \geq \frac{x^{\top}_{\star} \theta_{\star}}{\alpha_t} \;\middle|\; \mathcal{F}_t, Z_t  \right) \nonumber
\end{align}
Then, we use the value that we chose for $\alpha_t$ in \eqref{alpha_t}, and we have
\begin{align*}
  =  \mathbb{P} \bigg(x^{\top}_{\star} \hat{\theta}_t +  \beta_t(\delta') x^{ \top}_{\star}  V_t^{-\frac{1}{2}} \eta_t  \geq x^{\top}_{\star} \theta_{\star} +  \frac{2}{C} \beta_t(\delta') \norm{x_{\star}}_{V_t^{-1}} x^{\top}_{\star} \theta_{\star} | \mathcal{F}_t, Z_t   \bigg) 
\end{align*}
 we know that $|x^{\top}_{\star} \theta_{\star}| \leq \|x_{\star}\|_2 \|\theta_{\star}\|_2 \leq LS$. Hence,
\begin{align*}
   p_t \geq  \mathbb{P}  \bigg( \beta_t(\delta') x^{\top}_{\star}  V_t^{-\frac{1}{2}}  \eta_t \geq  x^{\top}_{\star} ( \theta_{\star} - \hat{\theta}_t  )  + \frac{2}{C}LS \beta_t(\delta') \norm{x_{\star}}_{V_t^{-1}} \mid \mathcal{F}_t, Z_t  \bigg)
\end{align*}
From Cauchy–Schwarz inequality and \eqref{ellipsoid.RLS}, we have \begin{align*}
    \mid x^{\top}_{\star} \left( \theta_{\star} - \hat{\theta}_t  \right) \mid \leq  \norm{x_{\star}}_{V_t^{-1}} \norm{\theta_{\star} - \hat{\theta}_t}_{V_t}  \leq \beta_t(\delta') \norm{x_{\star}}_{V_t^{-1}}.
\end{align*}  
Therefore, we can write 
\begin{align}
    p_t \geq \mathbb{P} \bigg(  x^{ \top}_{\star} V_t^{-\frac{1}{2}}  \eta_t  \geq   \norm{x_{\star}}_{V_t^{-1}} + \frac{2}{C}LS  \norm{x_{\star}}_{V_t^{-1}} \mid \mathcal{F}_t, Z_t \bigg)\label{optimisem.inequality.1}
\end{align}
We define $u^{\top} = \frac{x^{\top}_{\star} V_t^{-\frac{1}{2}}}{\norm{x_{\star}}_{V_t^{-1}}}$, and hence $\norm{u}_2 = 1$. It follows from \eqref{optimisem.inequality.1} that
\begin{align}
    p_t \geq \mathbb{P} \left( u^{\top} \eta_t \geq 1 + \frac{2}{C}LS   \right) \geq p,
\end{align} where the last inequality follows the concentration inequality \eqref{concen.} of the TS distribution. We also need to show that the high probability concentration inequality event does not effect the TS of being optimistic. This is because the chosen confidence bound $\delta' = \frac{\delta}{6T}$ is small enough compared to the anti-concentration property \eqref{anti.concen}. Moreover, we assume that $T \geq \frac{1}{3p}$ which implies that $\delta' \leq \frac{p}{2}$. We know that for any events $A$ ans $B$, we have \begin{align}
    \mathbb{P}(A\cap B) = 1 - \mathbb{P}(A^c \cup B^c) \geq \mathbb{P}(A) - \mathbb{P}(B^c). \label{event.ineqaulity}
\end{align}
We  apply \eqref{event.ineqaulity} with $A = \{ J_t(\Tilde{\theta}_t) \geq J(\theta_{\star}) \} $ and $B = \{\Tilde{\theta}_t \in \mathcal{E}_t^{\text{TS}} \} $ that leads to \begin{align*}
    \mathbb{P}\left(\Tilde{\theta}_t \in \Theta_t^{\text{opt}} \;\middle|\; \mathcal{F}_t, Z_t \right) \geq p - \delta' \geq \frac{p}{2}.
\end{align*} 
\end{proof}

\section{Proof of Theorem \ref{Main.theorem}} \label{appdnx.proof.of.theorem}

The proof presented below follows closely the proof of \cite{abeille2017linear} and is primarily presented here for completeness.  Specifically, we have identified that the only critical change that needs to be made to account for safety is the proof of actions being frequently optimistic in the face of constraints thanks to the modified anti-concentration property \ref{anti.concen}. This was handled in  the previous section \ref{appndx.proof.of.lema}. For completeness, we also prove in Lemma \ref{first.action} that the first action of Safe-LTS is always safe under our assumptions.

We use the following decomposition for bounding the regret:

\begin{align}
  R(T)   \leq & \sum_{t=1}^T \left(x^{\top}_{\star} \theta_{\star} - x_t \theta_{\star}\right)  \mathbbm{1}\{E_t\} \nonumber\\& = \sum_{t=1}^{T} \left( \underbrace{ x_{\star}^{\top}\theta_{\star} - x_t^{\top}\Tilde{\theta}_t}_{\text{Term I}} \right)\mathbbm{1}\{E_t\}  +  \sum_{t=1}^T \left( \underbrace{ x_t^{\top}\Tilde{\theta}_t - x_t^{\top} \theta_{\star}}_{\text{Term II}} \right)\mathbbm{1}\{E_t\}.
\end{align}

\subsection{Bounding Term I.}\label{appndix.bounding.term I}

For any $\theta$, we denote  $ {x}_t({\theta}) = \text{arg}\max_{x \in \mathcal{D}_t ^s }  x^{\top} {\theta} $.
On the event $E_t$, $\Tilde{\theta}_t$ belongs to $\mathcal{E}_t^{\text{TS}}$ which leads to
\begin{align}
    (\text{Term I}) \mathbbm{1}\{E_t\}  :=  R_t^{\text{TS}} \mathbbm{1}\{E_t\} \leq  \left( x_{\star}^{\top}\theta_{\star} - \inf_{\theta \in \mathcal{E}_t^{\text{TS}}} (x_t(\theta))^{\top}\theta \right) \mathbbm{1}\{Z_t\}. \label{first.regret.bound}
\end{align}
Here and onwards, we use $\mathbbm{1}\{\mathcal{E}\}$ as the indicator function applied to an event $\mathcal{E}$. We have also used the fact that $E_t$ is a subset of $Z_t$.
Next, we can also bound  \eqref{first.regret.bound} by the expectation over any random choice of $\Tilde{\theta} \in \Theta_t^{\rm{opt}}$ (recall \eqref{optimisitc}) that leads to 
\begin{align}
    R_t^{\text{TS}} \leq \mathbb{E}\left[\left((x_t(\Tilde{\theta}))^{\top}\Tilde{\theta} - \inf_{\theta \in \mathcal{E}_t^{\text{TS}}} (x_t(\theta))^{\top}\theta\right) \mathbbm{1}\{Z_t\} \;\middle|\; \mathcal{F}_t, \Tilde{\theta} \in \Theta_t^{\text{opt}}  \right]. \notag
\end{align}
Equivalently,  we can write
\begin{align}
    R_t^{\text{TS}}  \leq \mathbb{E} \left[\sup_{\theta \in \mathcal{E}_t^{\text{TS}}} \left( \big({x}_t(\Tilde{\theta})\big)^{\top} \Tilde{\theta} - \big( x_t(\theta) \big)^{\top} \theta \right) \mathbbm{1}\{Z_t\} \;\middle|\; \mathcal{F}_t , \Tilde{\theta} \in \Theta_t^{opt}  \right], 
    \end{align}  
     
    Then, using Cauchy–Schwarz and the definition of $\gamma_t(\delta')$ in \eqref{def.gamma}
  \begin{align}
    & \mathbb{E} \left[\sup_{\theta \in \mathcal{E}_t^{\text{TS}}}  \left({x}_t (\Tilde{\theta}) \right)^{\top}  \left(\Tilde{\theta} - \theta \right) \mathbbm{1}\{Z_t\} \;\middle|\; \mathcal{F}_t , \Tilde{\theta} \in \Theta_t^{opt}  \right] \nonumber \\&\leq \mathbb{E} \left[ \norm{ x_t(\Tilde{\theta}) }_{V_t^{-1}}  \sup_{\theta \in \mathcal{E}_t^{\text{TS}}} \norm{ \Tilde{\theta} - \theta }_{V_t} \;\middle|\; \mathcal{F}_t , \Tilde{\theta} \in \Theta_t^{opt}, Z_t \right] \nonumber \mathbb{P}(Z_t) \nonumber\\& \leq 2 \gamma_t(\delta') \mathbb{E} \left[ \norm{ x_t(\Tilde{\theta}) }_{V_t^{-1}} \;\middle|\; \mathcal{F}_t , \Tilde{\theta} \in \Theta_t^{opt}, Z_t \right]  \mathbb{P}(Z_t). \nonumber
\end{align}

This property  shows that the regret $R_t^{\text{TS}}$ is upper bounded by $V_t^{-1}$-norm of the optimal safe action corresponding to the any optimistic parameter $\Tilde{\theta}$. Hence, we need to show that TS samples from the optimistic set with high frequency. We prove in Lemma \ref{optimitic.lemma} that TS is optimistic with a fixed probability ($\frac{p}{2}$) which leads to bounding  $R_t^{\text{TS}}$ as follows:
\begin{align}
   R_t^{\text{TS}} \frac{p}{2} & \leq 2 \gamma_t(\delta') \mathbb{E} \left[ \norm{ x_t(\Tilde{\theta}_t) }_{V_t^{-1}} \;\middle|\; \mathcal{F}_t , \Tilde{\theta}_t \in \Theta_t^{opt}, Z_t \right] \mathbb{P}(Z_t) \frac{p}{2}  \\& \leq  2 \gamma_t(\delta') \mathbb{E} \left[ \norm{  x_{t}(\Tilde{\theta}_t) }_{V_t^{-1}} \;\middle|\; \mathcal{F}_t , \Tilde{\theta}_t \in \Theta_t^{opt}, Z_t \right] \mathbb{P}(Z_t) \mathbb{P} \left(\Tilde{\theta}_t \in \Theta_t^{\text{opt}} \;\middle|\; \mathcal{F}_t,Z_t \right) \nonumber \\& \leq  2 \gamma_t(\delta') \mathbb{E} \left[ \norm{ x_{t}(\Tilde{\theta}_t) }_{V_t^{-1}} \;\middle|\; \mathcal{F}_t ,  Z_t \right] \mathbb{P}(Z_t). 
\end{align}
By reintegrating over the event $Z_t$ we get \begin{align}
   R_t ^{\text{TS}} \leq  \frac{4 \gamma_t(\delta')}{p} \mathbb{E} \left[ \norm{ x_{t}(\Tilde{\theta}_t) }_{V_t^{-1}} \mathbbm{1}\{Z_t\} \;\middle|\; \mathcal{F}_t \right]. \label{regret.bound.over.Rts.}
\end{align}
Recall that $E_t \subset Z_t$, hence \begin{align}
    R^{\text{TS}}(T) \leq  \sum_{t=1}^T R_t^{\text{TS}} \mathbbm{1}\{E_t\} \leq \nonumber  \frac{4 \gamma_T(\delta')}{p}  \sum_{t=1}^T \mathbb{E} \left[ \norm{ x_{t}(\Tilde{\theta}_t) }_{V_t^{-1}} \;\middle|\; \mathcal{F}_t \right].
\end{align}
 
 For bounding this term, we rewrite the RHS above as:
 \begin{align}
      R^{\text{TS}}(T)  \leq  \bigg(  \sum_{t=1}^T \norm{x_t}_{V_t^{-1}}  +  \sum_{t=1}^T  \left( \mathbb{E} \left[ \norm{ x_{t}(\Tilde{\theta}_t) }_{V_t^{-1}} \;\middle|\; \mathcal{F}_t \right] - \norm{x_t}_{V_t^{-1}} \right)      \bigg).
 \end{align}
We can now bound the first expression using Proposition \ref{proposition.bounding random actions}. For the second expression we proceed as follows:
 \begin{itemize}
     \item First, the sequence $$Y_t = \sum_{s=1}^t  \left( \mathbb{E} \left[ \norm{ x_{s}(\Tilde{\theta}_s) }_{V_s^{-1}} \;\middle|\; \mathcal{F}_s \right] - \norm{x_s}_{V_s^{-1}} \right)$$ is a martingale by construction. 
    \item Second, under Assumption \ref{assmp.3}, $\norm{x_t}_2 \leq L$, and since $V_t^{-1} \leq \frac{1}{\lambda} I$, we  can write \begin{align}
     \mathbb{E} \left[ \norm{ x_{s}(\Tilde{\theta}_s) }_{V_s^{-1}} \;\middle|\; \mathcal{F}_s \right] - \norm{x_s}_{V_s^{-1}} \leq \frac{2 L}{\sqrt{\lambda}}, \forall t \geq 1.
 \end{align}
  \item Third, for bounding  $Y_T$, we use Azuma's inequality, and we have that with probability $1- \frac{\delta}{2}$, \begin{align}
      Y_T \leq \sqrt{\frac{8T L^2}{\lambda} \log{\frac{4}{\delta}}}.
  \end{align} \end{itemize}
  Putting these together, we conclude that  with probability $1-\frac{\delta}{2}$,
  \begin{align}
      R^{\text{TS}}(T) \leq  \frac{4 \gamma_T(\delta')}{p} \bigg( \sqrt{2 T d \log{ \big(1 +  \frac{T L^2}{\lambda}\big)}} +   \sqrt{\frac{8T L^2}{\lambda} \log{\frac{4}{\delta}}} \bigg).
  \end{align}
  
  \subsection{Bounding Term II}\label{appndix.bound.term II}
  We can bound on Term II using the general result of \cite{abbasi2011improved}. In fact, we can use the following general decomposition:
  \begin{align}
     & \sum_{t=1}^T (\text{Term II}) \mathbbm{1}\{E_t\}  := R^{\text{RLS}}(T)  \nonumber\\& =  \sum_{t=1}^T \left( x_t^{\top} \Tilde{\theta}_t - x_t^{\top} \theta_{\star}\right) \mathbbm{1}\{E_t\}  \leq \sum_{t=1}^T \mid x_t^{\top} (\Tilde{\theta}_t - \hat{\theta}_t) \mid \mathbbm{1}\{E_t\} +   \sum_{t=1}^T \mid x_t^{\top} (\hat{\theta}_t - \theta_{\star}) \mid \mathbbm{1}\{E_t\}. 
  \end{align} By Definition \ref{def.events}, we have $E_t \subseteq Z_t$ and $E_t \subseteq \Tilde{E}_t$, and hence \begin{align*}
      & \mid x_t^{\top} (\Tilde{\theta}_t - \hat{\theta}_t) \mid \mathbbm{1}\{E_t\} \leq \norm{x}_{V_t^{-1}} \gamma_t(\delta') \\& 
      \mid x_t^{\top} (\hat{\theta}_t - \theta_{\star}) \mid \mathbbm{1}\{E_t\} \leq \norm{x}_{V_t^{-1}} \beta_t(\delta'). 
  \end{align*} Therefore, from Proposition \ref{proposition.bounding random actions}, we have with probability $ 1- \frac{\delta}{2}$ \begin{align}
      R^{\text{RLS}} (T) \leq \left( \beta_T(\delta') + \gamma_T (\delta') \right) \sqrt{2 T d \log{ \left(1 + \frac{T L^2}{\lambda}\right)}}.
  \end{align}
  
  \subsection{Overall Regret Bound}
  Recall that from \eqref{overall.regret}, $R(T)\leq R^{TS}(T) +R^{\text{RLS}} (T)$. As shown previously,  each term is bounded separately with probability $1-\frac{\delta}{2}$. Using union bound over two terms, we get the following expression: 
  \begin{align}
      R(T) \leq  ( \beta_T(\delta') + \gamma_T(\delta') (1+\frac{4}{p}) ) \sqrt{2 T d \log{ (1 + \frac{T L^2}{\lambda})}}  + \frac{4 \gamma_T(\delta')}{p} \sqrt{\frac{8T L^2}{\lambda} \log{\frac{4}{\delta}}},
  \end{align}  holds with probability $1-\delta$ where $\delta' = \frac{\delta}{6T}$.

For completeness we show below that action $x_1$ is safe. Having established that, it follows that the rest of the actions $x_t, t> 1$ are also safe with probability at least $1-\delta'$. This is by construction of the feasible sets $\mathcal{D}_t^s$ and by the fact that $\mu_\star\in\mathcal{C}_t(\delta')$ with the same probability for each $t$.

\begin{lemma} \label{first.action}
    The first action that Safe-LTS chooses is safe, that is $x_1^{\top} \mu_{\star} \leq C$.
\end{lemma}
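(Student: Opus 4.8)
The plan is to show that at the very first round the confidence region $\mathcal{C}_1(\delta')$ contains $\mu_{\star}$ --- and crucially that it does so \emph{deterministically}, with no appeal to any high-probability event --- so that safety of $x_1$ follows immediately from the way the estimated safe set $\mathcal{D}_1^s$ is built.

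First I would specialize every quantity in Algorithm \ref{alg:safe.Ts} to $t = 1$. Since the sums in \eqref{RLS-estimate} and in the definition of the Gram matrix run over the empty index set, we have $V_1 = \lambda I$ and $\hat{\mu}_1 = 0$. Hence the confidence region constructed by the algorithm at the first round is $\mathcal{C}_1(\delta') = \{ v \in \mathbb{R}^d : \norm{v - \hat{\mu}_1}_{V_1} \leq \beta_1(\delta') \} = \{ v \in \mathbb{R}^d : \sqrt{\lambda}\,\norm{v}_2 \leq \beta_1(\delta') \}$.

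Next I would read off the radius from \eqref{eq:beta_t}: at $t = 1$ we have $t - 1 = 0$, so $\beta_1(\delta') = R\sqrt{d\log(1/\delta')} + \sqrt{\lambda}\,S \geq \sqrt{\lambda}\,S$. Dividing the defining inequality of $\mathcal{C}_1(\delta')$ by $\sqrt{\lambda}$ then shows $\{ v \in \mathbb{R}^d : \norm{v}_2 \leq S \} \subseteq \mathcal{C}_1(\delta')$, whence $\mu_{\star} \in \mathcal{C}_1(\delta')$ by Assumption \ref{assmp.2}. I would also remark that $\mathcal{D}_1^s$ is nonempty --- by Assumption \ref{assmp.3} the origin lies in $\mathcal{D}_0$ and $0^{\top} v = 0 < C$ for all $v$, so $0 \in \mathcal{D}_1^s$ --- so that $x_1$ is well defined.

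Finally, since $x_1 \in \mathcal{D}_1^s$ by the action selection rule \eqref{played.action}, and every element of $\mathcal{D}_1^s$ satisfies $x^{\top} v \leq C$ for all $v \in \mathcal{C}_1(\delta')$ by \eqref{safe.decision.set}, taking the particular choice $v = \mu_{\star} \in \mathcal{C}_1(\delta')$ yields $x_1^{\top}\mu_{\star} \leq C$, as claimed. There is essentially no technical obstacle here; the one point worth emphasizing is that the containment $\mu_{\star} \in \mathcal{C}_1(\delta')$ holds with probability one, so the first round costs nothing from the failure budget, and the remaining actions $x_t$, $t > 1$, inherit safety with probability $1 - \delta'$ via the event $\hat{Z}_t$ of Definition \ref{def.events}.
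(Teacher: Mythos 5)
Your proof is correct and rests on exactly the same facts as the paper's: $V_1=\lambda I$, $\hat{\mu}_1=0$, and $\beta_1(\delta')\ge\sqrt{\lambda}S$. The paper phrases the last step dually --- the membership condition for $\mathcal{D}_1^s$ reduces to $S\norm{x_1}_2\le C$, after which Cauchy--Schwarz gives $x_1^{\top}\mu_\star\le\norm{x_1}_2\norm{\mu_\star}_2\le C$ --- whereas you observe that the ball of radius $S$ (and hence $\mu_\star$) lies in $\mathcal{C}_1(\delta')$ deterministically and invoke the definition of $\mathcal{D}_1^s$ directly; these are equivalent, and your added remarks on nonemptiness of $\mathcal{D}_1^s$ and the zero cost to the failure budget are accurate.
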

\begin{proof}
At round $t=1$, the RLS-estimate $\hat{\mu}_1 = 0$ and $V_1 = \lambda I$. Thus, Safe-LTS chooses the action which maximizes the expected reward while satisfying $x_1^{\top} \hat{\mu}_1 + \beta_1(\delta') \norm{x_1}_{V_1^{-1}} \leq C$. Hence, $x_1$ satisfies:
 \begin{align*}
     \beta_1(\delta') \norm{x_1}_{V_1^{-1}} \leq C. \label{fist.action}
 \end{align*}
From \eqref{eq:beta_t} and $V_1^{-1} = (1/\lambda) I$ leads to $ S \norm{x_1}_2 \leq C$ which completes the proof. 
\end{proof}

\section{More on experimental results}\label{simulationssss}
\paragraph{Standard deviations.} Figure \ref{fig:secondcomparison.variance} shows the sample standard deviation of regret around the average per-step regret for each one of the curves depicted in Figure \ref{fig:regretcomparison}. We remark on the strong dependency of the performance of  LUCB-based algorithms on the specific problem instance, whereas the performance of Safe-LTS does not vary significantly under the same instances.

  \begin{figure*}[t!]
% \centering
  \includegraphics[width=0.325\linewidth]{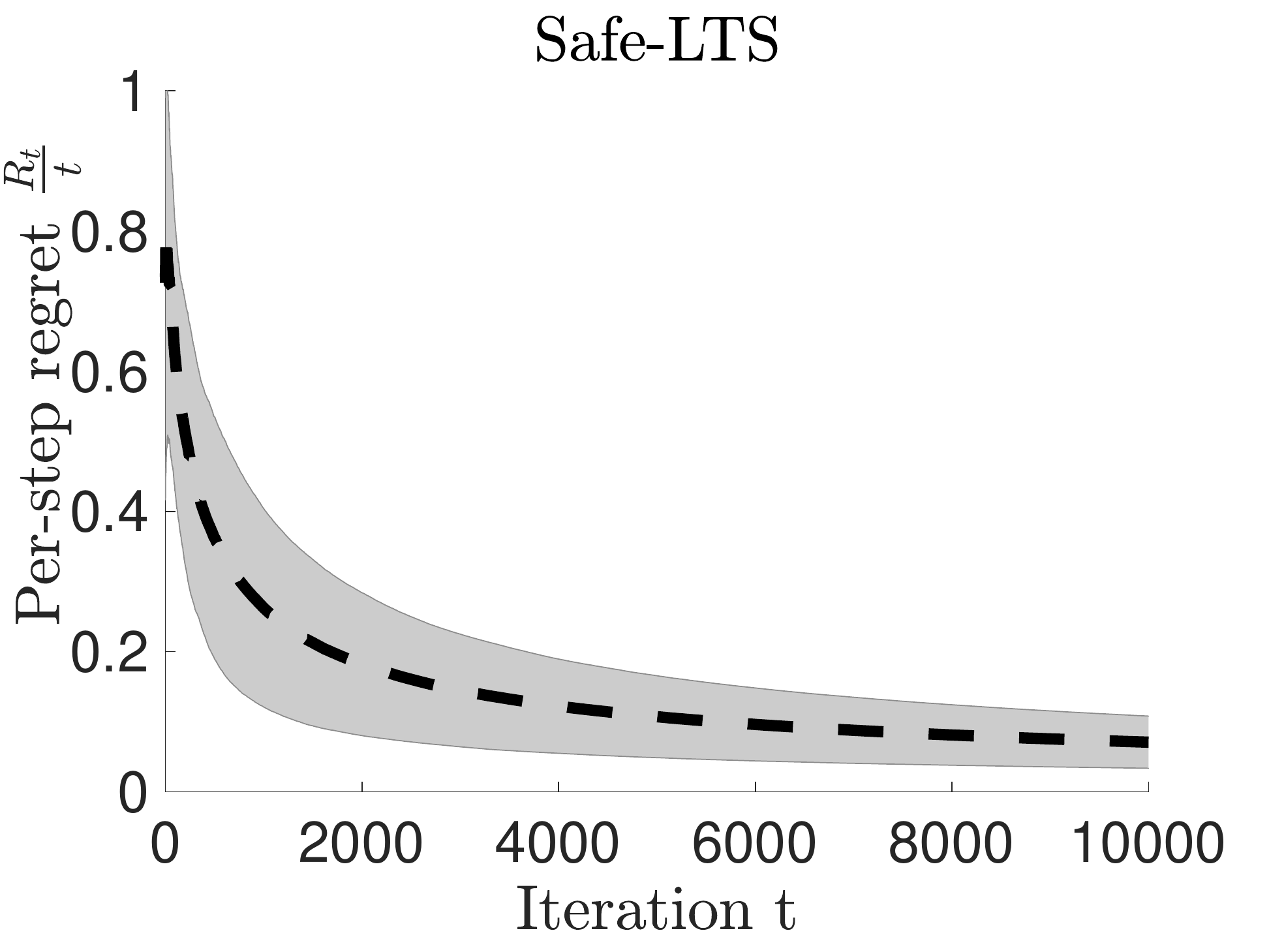}
    % \caption{Growth of safe sets in Safe-LUCB}   
   \includegraphics[width=0.325\linewidth]{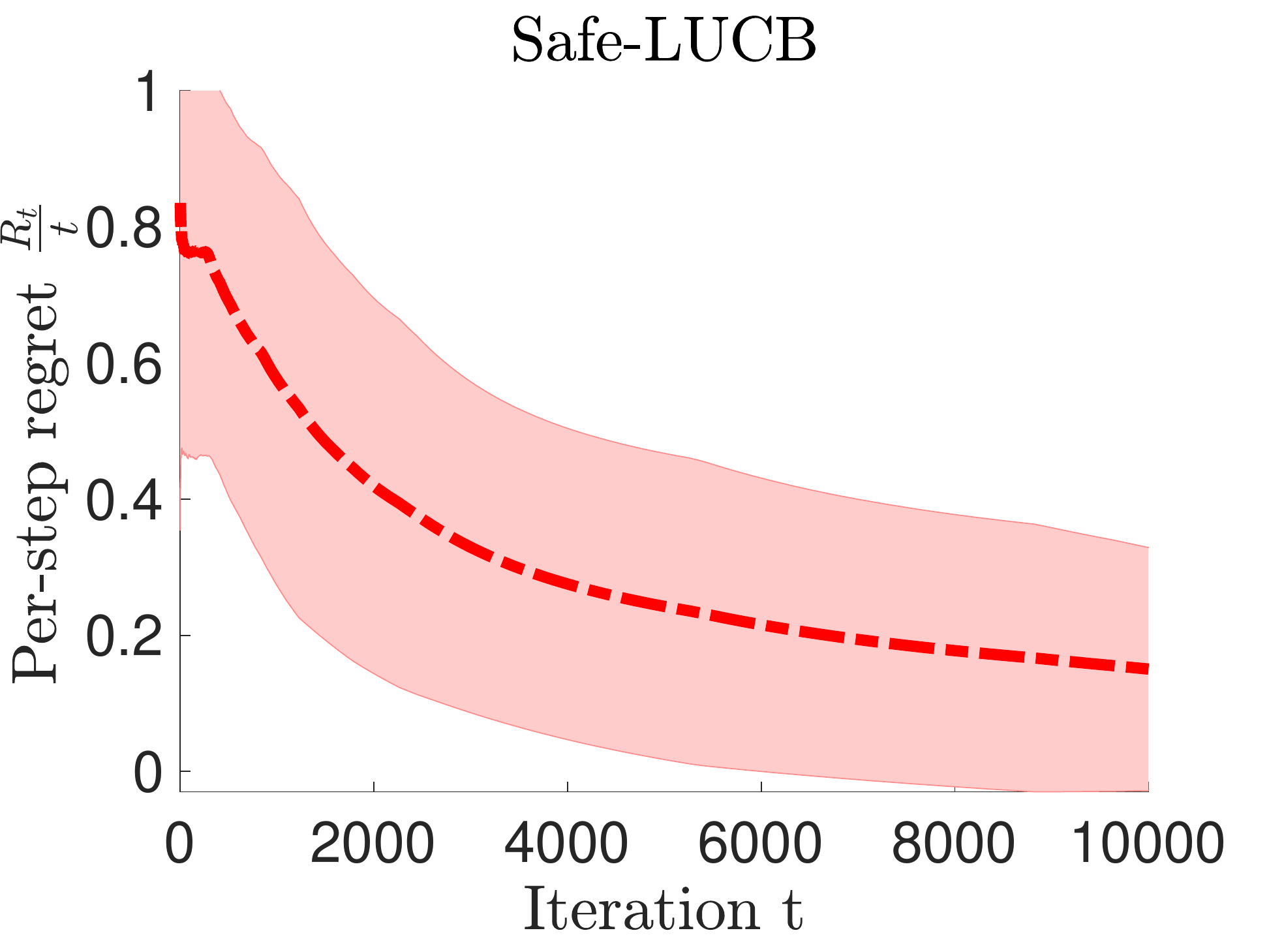}
    %   \caption{Growth of safe sets in Safe-LTS}
\includegraphics[width=0.325\linewidth]{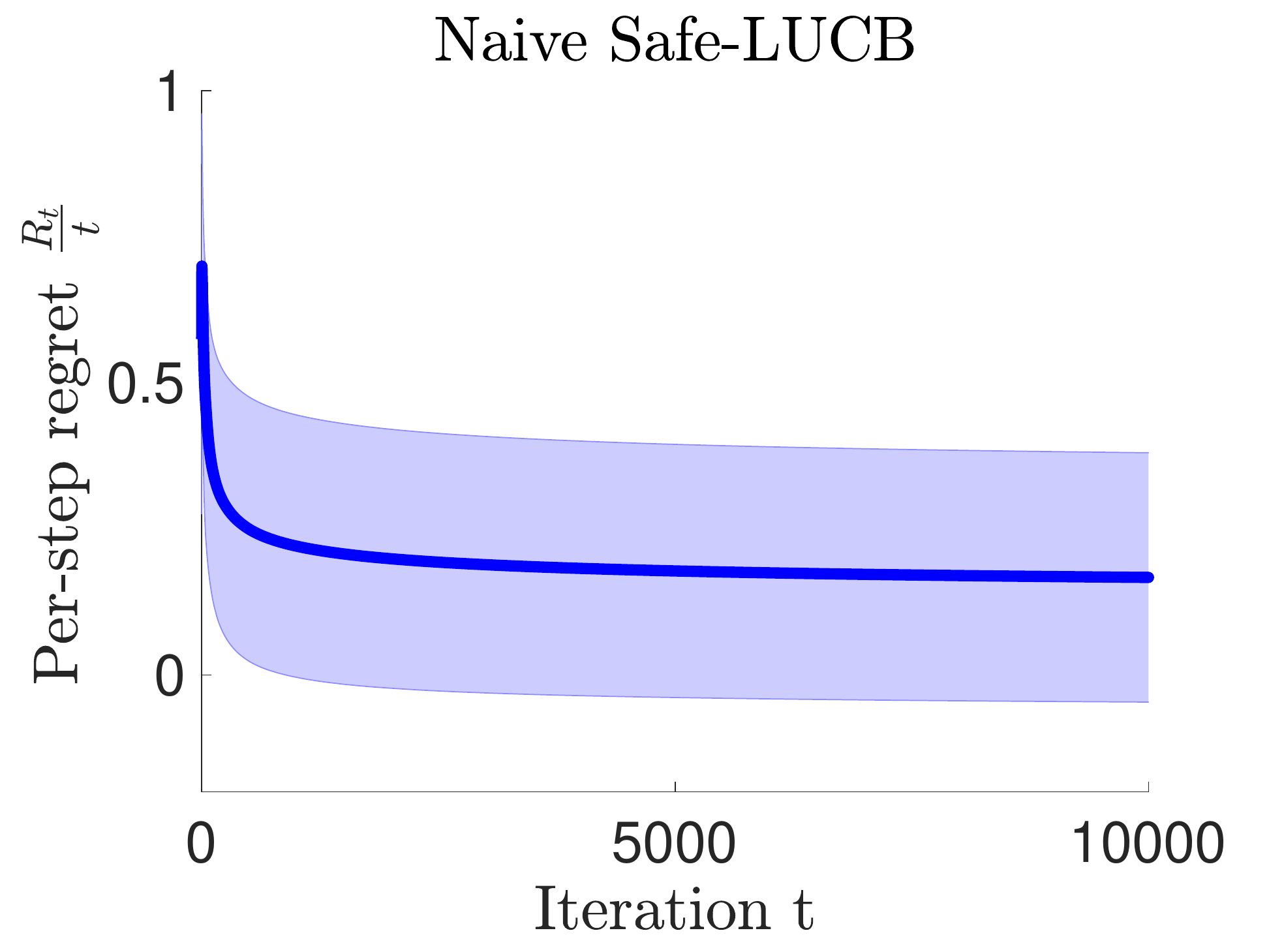}
        \caption{Comparison of mean per-step regret for Safe-LTS, Safe-LUCB, and Naive Safe-LUCB. The shaded regions show one standard deviation around the mean. The results are averages over 30 problem realizations.}
  \label{fig:secondcomparison.variance}
 \end{figure*}
 
 \paragraph{On the ``Inflated Safe-LUCB".} Here,  we  provide further evidence on the performance of the ``Inflated Naive Safe-LUCB" that we introduced in Section \ref{comparison with safe-ucb}. Recall from Figure \ref{inflated_regret} that ``Inflated Safe-LUCB" does \emph{not} always lead to proper safe set expansion and hence good regret performance. Specifically, the regret curves in Figure \ref{inflated_regret} are for a problem instance with  $\theta_* = \begin{bmatrix} 0.5766\\-0.1899 \end{bmatrix}$, $\mu_* = \begin{bmatrix} 0.2138\\-0.0020 \end{bmatrix}$, and $C = 0.0615 $. 
 %in order to show that both of the above algorithms may Fig. \ref{inflated_regret} illustrates this by showing the average regret in the above scenario, where both algorithms show nearly linear regret (for more detailed illustrations, see Section \ref{simulationssss} of the Appendix).
 %we  showed a problem instance in which simply inflating the confidence set of Naive Safe-LUCB by $1+\frac 2 C LS$ to favor larger exploration does \emph{not} lead to proper safe set expansion and hence good regret performance. The 
 While ``Inflated Naive Safe-LUCB" does not achieve the logarithmic regret of Safe-LTS in general, our numerical simulations suggest that the former actually outperforms the latter in 
 %However, we also would like to highlight that through the same modification, Inflated Naive Safe-LUCB will outperform Safe-LTS 
 %in most 
 randomly generated instances. This is illustrated in  Fig.\ref{fig:better.inflated}. %However, as previously noted, this appears to not be the case in general (see Fig. \ref{inflated_regret}).

  \begin{figure}
     \centering
          \includegraphics[width=0.4\textwidth]{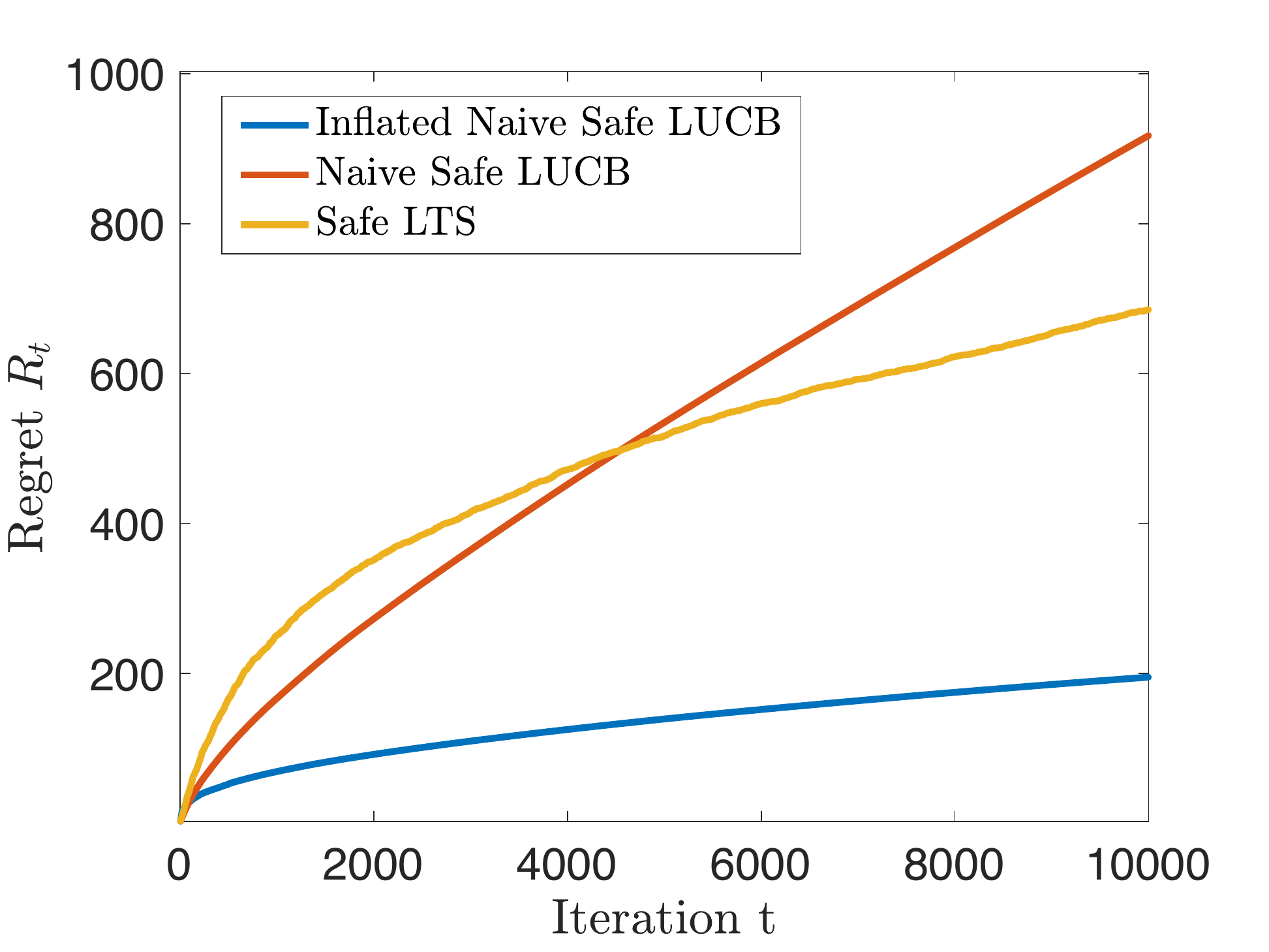}
         \caption{Comparison of the cumulative regret of   Safe-LTS and Naive Safe-LUCB and Inflated Naive Safe-LUCB algorithms over randomly generated instances.}
         \label{fig:better.inflated}
   \end{figure}

\paragraph{On safe-set expansion.} Fig. \ref{fig:safesets.expansion} highlights the gradual expansion of the safe decision set for Safe-LUCB in \cite{amani2019linear} and Safe-LTS for a problem instance  in which the safety constraint is active for parameters $\theta_* = \begin{bmatrix}
    0.9\\0.23 \end{bmatrix}$, $\mu_* = \begin{bmatrix}
    0.55\\-0.31 \end{bmatrix}$, and $C = 0.11 $... Similarly, Fig. \ref{fig:compar.inflated.lts} illustrates the expansion of the safe decision set for ``Inflated Naive Safe-LUCB" and Safe-LTS  for a problem instance with parameters $\theta_* = \begin{bmatrix}
    0.5766\\-0.1899 \end{bmatrix}$, $\mu_* = \begin{bmatrix}
    0.2138\\-0.0020 \end{bmatrix}$, and $C = 0.0615 $ in which the former provides poor (almost linear) regret. These empirical experiments reinforce the main message of our paper that the inherent randomized nature of TS is crucial for properly expanding the safe action set.

% Regarding the performance of LTS with dynamic noise distribution in Fig \ref{regret}, we need to  note that  as we discussed intuitively in Section \ref{challenges of safety},  at the first rounds of algorithm, we do not have an accurate estimate of  $\theta_{\star}$. Hence, in order to favor optimism \eqref{eq:optimism} in the first stages of the algorithm , we need to sample more aggressively, i.e., $\tilde{\theta}_t$ should have a larger norm than $\theta_{\star}$. As the algorithm progresses and we collect more actions that result in a better estimate of $\theta_{\star}$, explorations do not need to be so aggressive, otherwise it may cause  the growth of the regret.  The linear-decreasing scheme is the first natural attempt in quantifying this.

\begin{figure*}
\centering
  \includegraphics[width=0.45\linewidth]{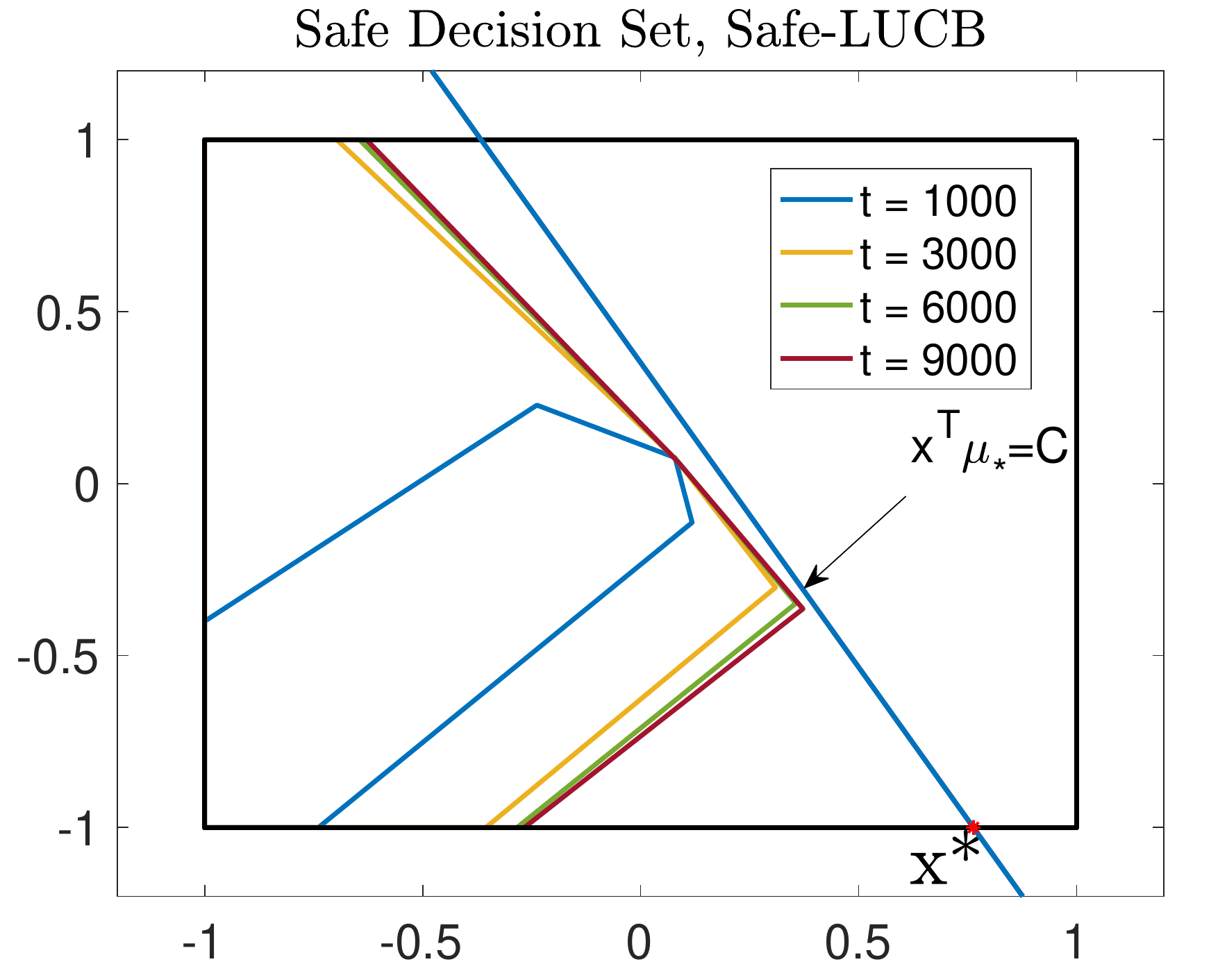}
     % \caption{Growth of safe sets in Safe-LUCB}   
  \includegraphics[width=0.45\linewidth]{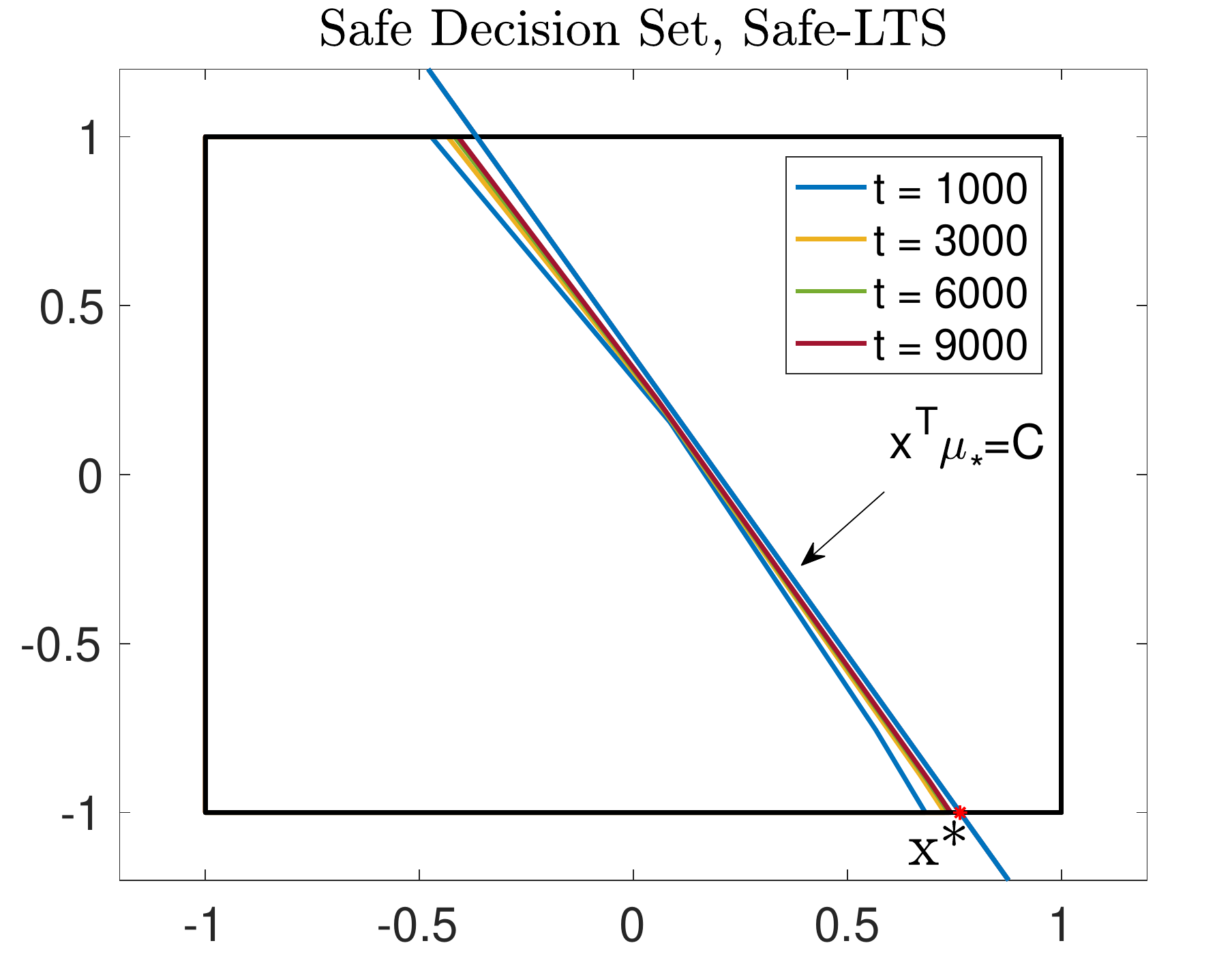}
%     %   \caption{Growth of safe sets in Safe-LTS}
% \includegraphics[width=0.8\linewidth]{regret.eps}
        \caption{ Comparison of  expansion of a safe decision sets for  Safe-LUCB and Safe-LTS, for a single problem instance.  }
  \label{fig:safesets.expansion}
 \end{figure*}

   \begin{figure*}[t!]
% \centering
  \includegraphics[width=0.45\linewidth]{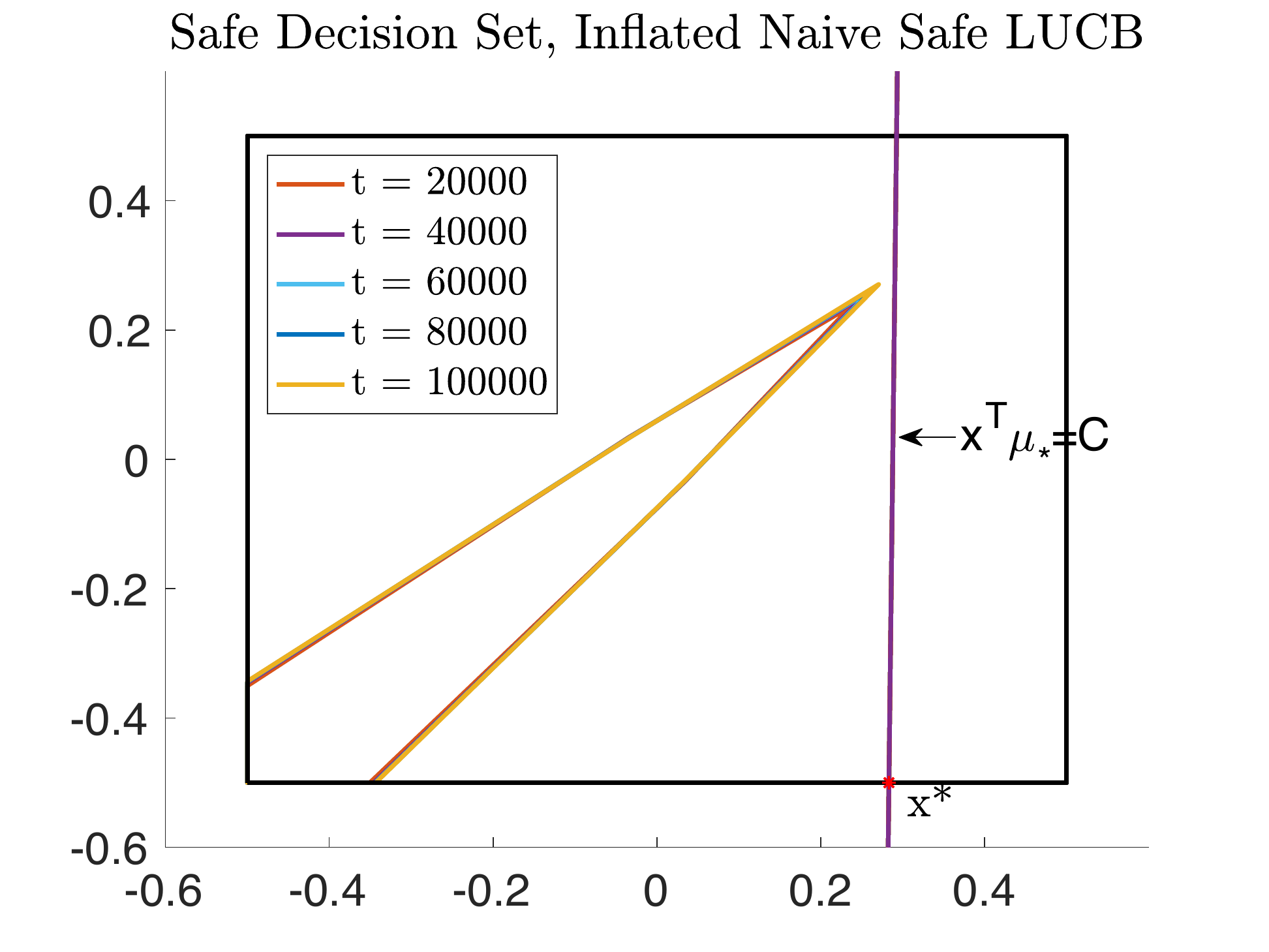}
    % \caption{Growth of safe sets in Safe-LUCB}   
   \includegraphics[width=0.39\linewidth]{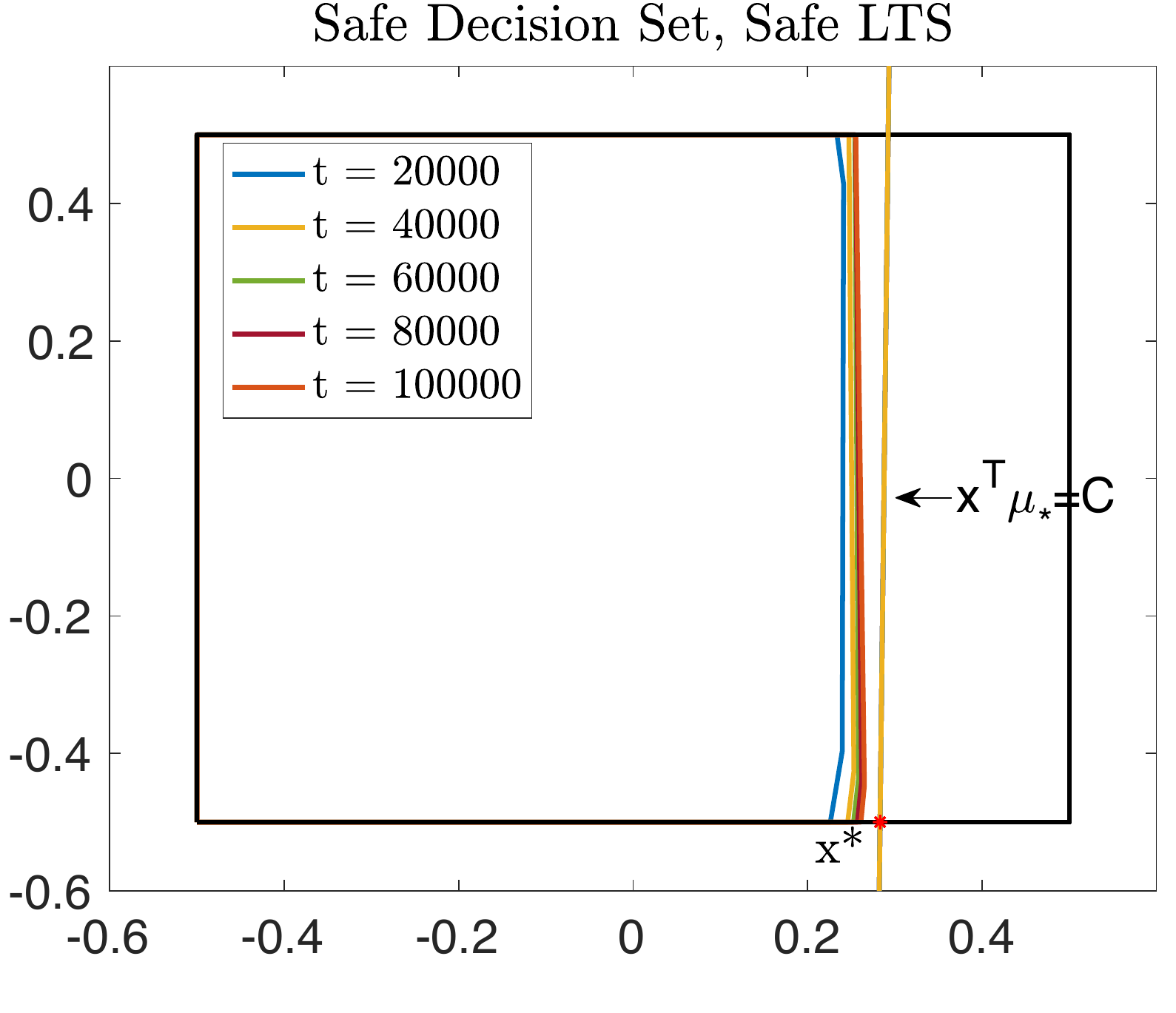}
    %   \caption{Growth of safe sets in Safe-LTS}
        \caption{Comparison of expansion of safe decision sets for Safe-LTS,  and Inflated Naive Safe-LUCB.}
  \label{fig:compar.inflated.lts}
 \end{figure*}

\end{document}